\theoremstyle{thmstyleone}%
\newtheorem{theorem}{Theorem}
\newtheorem{lemma}{Lemma}%
\theoremstyle{thmstyletwo}%
\newtheorem{remark}{Remark}%
\theoremstyle{thmstylethree}%
\newtheorem{definition}{Definition}%
\begin{document}
\title[Article Title]{Manifold Filter-Combine Networks}

\author[1]{\fnm{David R.} \sur{Johnson}}\email{davejohnson408@u.boisestate.edu}
\author[2]{\fnm{Joyce A.} \sur{Chew}}\email{joycechew@math.ucla.edu}
\author[3]{\fnm{Siddharth} \sur{Viswanath}}\email{siddharth.viswanath@yale.edu}
\author[4]{\fnm{Edward} \sur{De Brouwer}}\email{edward.debrouwer@gmail.com}
\author[2]{\fnm{Deanna} \sur{Needell}}\email{deanna@math.ucla.edu}
\author[3,5]{\fnm{Smita } \sur{Krishnaswamy}}\email{smita.krishnaswamy@yale.edu}
\author*[1,6]{\fnm{Michael} \sur{Perlmutter}}\email{mperlmutter@boisestate.edu}

\affil[1]{\orgdiv{Program in Computing}, \orgname{Boise State University}, \orgaddress{
\city{Boise},  \state{ID}, \country{USA}}, \postcode{83725}}

\affil[2]{\orgdiv{Department of Mathematics}, \orgname{UCLA}, \orgaddress{ \city{Los Angeles},  \state{CA}, \country{USA},\postcode{90095}}}

\affil[3]{\orgdiv{Department of Computer Science}, \orgname{Yale University}, \orgaddress{ \city{New Haven},  \state{CT}, \country{USA}, \postcode{06511}}}

\affil[4]{\orgdiv{Genentech}, \orgaddress{ \city{San Francisco},  \state{CA}, \country{USA}, \postcode{94080}}}

\affil[5]{\orgdiv{Department of Genetics}, \orgname{Yale University}, \orgaddress{ \city{New Haven},  \state{CT}, \country{USA}, \postcode{06511}}}

\affil[6]{\orgdiv{Department of Mathematics}, \orgname{Boise State University}, \orgaddress{
\city{Boise},  \state{ID}, \country{USA}}, \postcode{83725}}

\abstract{
In order to better understand manifold neural networks (MNNs), we introduce Manifold
Filter-Combine Networks (MFCNs).
Our filter-combine framework parallels the popular
aggregate-combine paradigm for graph neural networks (GNNs) and naturally suggests
many interesting families of MNNs which can be interpreted as manifold analogues of
various popular GNNs. We propose a method for implementing MFCNs on high-dimensional point clouds that relies on approximating an underlying manifold by a sparse graph. We then
prove that our method is consistent in the sense that it converges to a continuum limit as
the number of data points tends to infinity, and we numerically demonstrate its effectiveness on real-world and synthetic data sets.}

\keywords{Geometric Deep Learning, Manifold Learning, Manifold Neural Networks}

\maketitle

\section{Introduction}\label{sec: introduction}

The field of geometric deep learning \citep{bronstein2017geometric,bronstein2021geometric} aims to extend the success of deep learning from data such as images, with a regular grid-like structure, to more irregular domains such as graphs and manifolds. Most notably, graph neural networks (GNNs) \citep{kipf:classGCNN2017,xu2018how,hamilton2017inductive,velivckovic2017graph,Defferrard2018,levie2019transferability} have rapidly emerged as an extremely active area of research  \citep{ZHOU202057,wu2020comprehensive} and are utilized in industrial applications such as Google Maps \citep{derrow2021eta} and Amazon's product recommender system \citep{Wang2022}. 

By contrast, the manifold side of geometric deep learning is much less explored. Moreover, much of the existing literature is either limited to specific manifolds \citep{cohen:sphericalCNNs2018, esteves2020spin} or to two-dimensional surfaces embedded in three-dimensional space \citep{masci2015geodesic,boscaini2015learning,Masci:geoCNN2015, schonsheck2022parallel}  and cannot be applied to high-dimensional manifolds (possibly with co-dimension greater than one). 
 This is despite the fact that unsupervised manifold learning algorithms \citep{tenenbaum:isomap2000,coifman:diffusionMaps2006,maaten:tSNE2008} are commonly used for representing higher-dimensional data \citep{van2018recovering, kuchroo2021topological, moyle2021structural}). These algorithms consider high-dimensional point clouds $\{x_i\}_{i=1}^n\subseteq \mathbb{R}^D$ and assume that the points satisfy the manifold hypothesis, i.e., that they lie upon some low-dimensional manifold $\mathcal{M}$. They then aim to find a low-dimensional representation of the $x_i$ which captures the intrinsic data geometry.

 Inspired by the successes of graph neural networks and manifold learning, several recent works have proposed  manifold neural networks (MNNs) \citep{wang2021stability,wang2021stabilityrel} and manifold scattering transforms \citep{perlmutter:geoScatCompactManifold2020,mcewen2021scattering,chua2024generalizing}. These works define convolution in terms of a manifold Laplacian $\mathcal{L}$ such as the (negative) Laplace Beltrami operator $-\Delta=\text{div}\circ\nabla$, paralleling common spectral GNNs \citep{bruna:spectralNN2014,Defferrard2018,Levie:CayleyNets2017} that define convolution in terms of the graph Laplacian. 
 In particular, several recent papers \citep{chew2023convergence,chew2022geometric,wangsparseconvergence}  have introduced numerical methods for implementing MNNs on point clouds satisfying the manifold hypothesis. These methods approximate the manifold $\mathcal{M}$ by a graph $G_n$ whose vertices are the data points $x_i$. Then, the eigendecomposition of the associated graph Laplacian $\mathbf{L}_n$ is used to approximate the eigenfunctions and eigenvalues of the manifold Laplacian.  The authors establish the accuracy and statistical consistency of these methods by proving that their discrete implementations converge to continuum limits as the number of data points tends to infinity under various assumptions.

In this work, in order to to enrich theoretical understanding of MNNs and establish a flexible framework for designing families of MNNs, we introduce \emph{Manifold Filter-Combine Networks}. Our manifold filter-combine paradigm deliberately parallels the aggregate-combine  framework introduced in \citet{xu2018how} to understand GNNs. It naturally leads one to consider many interesting classes of MNNs which can be understood as the manifold counterparts of various popular GNNs. We establish sufficient conditions for such networks to converge to a continuum limit as the number of sample points $n$ tends to infinity. Finally, we conduct numerical experiments which show that various MFCN models are effective on real-world and synthetic data sets.

\subsection{Contributions}\label{sec: contributions}

The primary contributions of this work are:

\begin{enumerate}
    \item We introduce \emph{Manifold Filter-Combine Networks} as a novel framework for understanding and designing manifold neural networks.
    \item In Theorem \ref{thm: Filter Error short}, we provide sufficient conditions for our numerical implementation of spectral filters on finite point clouds to converge to a continuum limit as the number of sample points $n\rightarrow\infty$. Notably, unlike most of the existing literature on the convergence of spectral filters and MNNs, we do not assume that the points are generated from the uniform distribution. 
    \item In Theorem \ref{thm: bound given filter bound short}, we build upon Theorem \ref{thm: Filter Error short} by providing sufficient conditions for our numerical implementation of MFCNs to converge to a continuum limit as the number of sample points $n\rightarrow\infty$. Unlike previous results on the convergence of MNNs, our rate of convergence does not exhibit exponential dependence on the depth of the network. Instead, we show that we may achieve linear dependence on network depth if the weights of the network are properly normalized.
    
    \item We conduct numerical experiments which empirically validate our theory and show that various MFCN models  
    are effective on real-world and synthetic data sets. As part of these experiments, we also introduce Infogain, a novel method for selecting the scales utilized in diffusion wavelets.
\end{enumerate}

\subsection{Notation}

We let $\mathcal{M}$ be a compact, connected, $d$-dimensional Riemannian manifold and let $\mu$ be a probability distribution on $\mathcal{M}$ with density $\rho$.  We let $L^2(\mathcal{M})$ denote the set of functions that are square integrable with respect to $\mu$ and $\mathcal{C}(\mathcal{M})$ denote the set of continuous functions on $\mathcal{M}$. We let $\mathcal{L}$ denote a differential operator which is assumed to be symmetric on $L^2(\mathcal{M})$ and is to be thought of as a manifold Laplacian.  We let $\{\phi_i\}_{i=1}^\infty$ denote an orthonormal basis of eigenfunctions $\mathcal{L}\phi_i=\lambda_i\phi_i.$  We will use these eigenfunctions to define Fourier coefficients denoted by $\widehat{f}(i).$

In much of our analysis, we will assume that $\mathcal{M}$ is unknown and that we only have access to a function $f\in\mathcal{C}(\mathcal{M})$ evaluated at sample points $\{x_j\}_{j=1}^n\subseteq\mathbb{R}^D$. In this setting, we will let $P_n:\mathcal{C}(\mathcal{M})\rightarrow\mathbb{R}^n$ be the normalized evaluation operator  
$(P_nf)(j)=\frac{1}{\sqrt{n}}f(x_j),$ 
and let $G_n$ denote a graph whose vertices are the sample points $x_j$. We will let $\mathbf{L}_n$ denote the graph Laplacian associated to $G_n$ and let $\phi_i^n$ be an orthonormal basis of eigenvectors, $\mathbf{L}_n\phi_i^n=\lambda^n_i\phi_i^n,$ $0=\lambda^n_1\leq \lambda^n_2\leq \ldots\leq \lambda^n_n$. Analogous to the continuous setting, we will use the $\phi_i^n$ to define discrete Fourier coefficients $\widehat{\mathbf{x}}(i)$.

The primary purpose of this paper is to introduce and analyze a family of neural networks to process functions defined on $\mathcal{M}$. Towards this end, we will let $F=(f_1,\ldots,f_C)$ denote a row-vector valued function and let $F^{(\ell)}$ denote the hidden representation in the $\ell$-th layer of our network, with $F^{(0)}=F$. When we approximate our network on $G_n$, we will instead assume that we are given an $n\times C$ data matrix $\mathbf{X}=(\mathbf{x}_1,\ldots,\mathbf{x}_C$).

\subsection{Organization}
The rest of this paper is organized as follows. In Section \ref{sec: background}, we will review necessary background material from signal processing and neural networks on graph and manifolds and will also provide a brief overview of manifold learning. In Section \ref{sec: MFCN}, we will introduce a novel class of manifold neural networks which we name Manifold Filter-Combine Networks (MFCNs), paralleling commonly used aggregate-combine networks for graph structured data, and also provide a numerical method for implementing MFCNs on point clouds satisfying the manifold hypothesis. In Section \ref{sec: convergence}, we will then provide convergence guarantees for this method as the number of samples points tends to infinity. We will conduct numerical experiments in Section \ref{sec: results}, before providing a brief conclusion in Section \ref{sec: conclusion}. We will provide proofs of all of our theoretical results as well as further experimental details in the appendix.

\section{Background and Preliminaries}\label{sec: background}
\subsection{Graph Signal Processing and Convolutions}\label{sec: background gsp}
Graph signal processing \citep{shuman:emerging2013,Ortega2018} aims to extend traditional signal processing methods such as the Fourier transform or the wavelet transform to signals defined on the vertices of a graph by using the eigendecomposition of the graph Laplacian. Formally, we consider a weighted graph $G=(V,E,\eta)$, with vertex set $V=\{v_1,\ldots,v_n\}$, edge set $E$,  and weighting function $\eta:E\rightarrow [0,\infty)$. We then define the (unnormalized) graph Laplacian by 
$
\mathbf{L}=\mathbf{D}-\mathbf{A},
$ 
where $\mathbf{A}\in\mathbb{R}^{n\times n}$ is the weighted adjacency matrix of $G$ and $\mathbf{D}$ is the corresponding diagonal degree matrix, i.e., $\mathbf{D}_{i,i}=\sum_{j=1}^n\mathbf{A}_{i,j}$ and $\mathbf{D}_{i,j}=0$ for all $i\neq j$. (Alternatively, one could also consider the normalized graph Laplacian $\mathbf{L}_{\text{normalized}}=\mathbf{D}^{-1/2}\mathbf{L}\mathbf{D}^{-1/2}$.) 
It is known (see, e.g., \citet{shuman:emerging2013}) that $\mathbf{L}$ is a positive definite matrix and admits an orthonormal basis of eigenvectors $\mathbf{u}_1,\ldots,\mathbf{u}_n$ with $\mathbf{L}\mathbf{u}_i=\lambda_i\mathbf{u}_i$, $0=\lambda_1\leq\lambda_2\leq\ldots\leq\lambda_n$. 

Given a signal (function) $\mathbf{x}$ defined on the vertices $V$, we will, in a minor abuse of notation, identify $\mathbf{x}$ with the vector $\mathbf{x}\in\mathbb{R}^n,$ $\mathbf{x}(i)=\mathbf{x}(v_i)$. We then define the graph Fourier transform by 
$
\widehat{\mathbf{x}}(i)=\langle\mathbf{u}_i,\mathbf{x}\rangle_2.
$
Since the $\mathbf{u}_i$ form an orthonormal basis, we obtain the Fourier inversion formula
$$
\mathbf{x}=
\sum_{i=1}^n \widehat{\mathbf{x}}(i)\mathbf{u}_i,
$$
where for $1\leq i \leq n$, $\widehat{\mathbf{x}}(i)$ is the generalized Fourier coefficient defined by $\widehat{\mathbf{x}}(i)=\langle\mathbf{u}_i,\mathbf{x}\rangle_2.$

We note that in the case that $G=C_n$ is a cycle graph, then the graph Fourier transform reduces to the classical discrete Fourier transform (up to multiplication by a constant). 
Given the graph Fourier transform, one may then define convolution as multiplication in the Fourier domain, analogous to the convolution theorem in classical signal processing, 
i.e., 
$$
\mathbf{x}\star\mathbf{y}=\sum_{i=1}^n\widehat{\mathbf{x}}(i)\widehat{\mathbf{y}}(i)\mathbf{u}_i,
$$
so that $\widehat{\mathbf{x}\star\mathbf{y}}(i)=\widehat{\mathbf{x}}(i)\widehat{\mathbf{y}}(i)$. 
This notion of convolution is used in graph signal processing to construct filters of the form $\mathbf{x}\mapsto\mathbf{y}\star\mathbf{x}$ where each signal of interest $\mathbf{x}$ is convolved with a filter $\mathbf{y}$.
Often, it is useful to assume that the frequency response of the filter is a function of the eigenvalues, i.e., that there exists a function $w$ such that $\widehat{\mathbf{y}}(i)=w(\lambda_i)$. This leads to spectral filters of the form
\begin{equation}\label{eqn: simple graph convolutional filter}
w(\mathbf{L})\mathbf{x}=\sum_{i=1}^nw(\lambda_i)\widehat{\mathbf{x}}(i)\mathbf{u}_i.
\end{equation}

\subsection{Manifold Learning}\label{sec: background manifold learning}

The \emph{manifold hypothesis} is the belief that many high-dimensional data sets have intrinsic low-dimensional structure. Formally, one is given a high-dimensional data set $\{x_i\}_{i=1}^n\subseteq \mathbb{R}^D$ and assumes that the data points $x_i$ lie upon (or near) some unknown $d$-dimensional Riemannian manifold, $\mathcal{M}$, where $d\ll D$. 

Manifold learning \citep{van2009dimensionality, izenman2012introduction,lin2015geometric,Moon2018Manifold} refers to a collection of algorithms designed for data sets that are believed to satisfy the manifold hypothesis. 
Given the data points $x_i$, one aims to produce a low-dimensional  representation that approximates the intrinsic geometry of~$\mathcal{M}$. Intuitively, such methods can be thought of as achieving dimensionality reduction by detecting  non-linear patterns in the data, analogous to the manner in which Principal Component Analysis (PCA) is used to detect linear patterns. 

Many popular manifold learning algorithms such as diffusion maps \citep{coifman:diffusionMaps2006} and Laplacian eigenmaps \citep{belkin:laplacianEigen2003} operate by constructing a weighted graph $G_n=(V_n,E_n, \eta)$ where the vertices are the data points, i.e., $V_n=\{x_i\}_{i=1}^n$. The edge sets can be constructed in a variety of ways. Here we  focus on \begin{enumerate}
\item $\epsilon$-graphs, in 
which one places an edge between $x_i$ and $x_j$ if $\|x_i-x_j\|_2\leq \epsilon$, and 
\item $k$-NN graphs, in which each $x_i$ is connected to its $k$ nearest neighbors.
\end{enumerate}
In either case, if desired, one can then assign edge weights using the weighting function $\eta$. Typically, one assumes that the edge weights are a non-increasing function of the Euclidean distance between $x_i$ and $x_j$ (in the ambient space $\mathbb{R}^D$).

Laplacian eigenmaps \citep{belkin:laplacianEigen2003} consider the eigendecomposition\footnote{In \citep{belkin:laplacianEigen2003}, the authors actually consider a \emph{generalized} eigenvalue problem. We ignore this for ease of exposition.} of the graph Laplacian $\mathbf{L}_n=\mathbf{D}_n-\mathbf{A}_n$ associated to $G_n$, $\mathbf{L}_n\phi_i^n=\lambda_i^n\phi_i^n$, where  $0=\lambda_1^n\leq \lambda^n_2\leq\ldots \leq\lambda^n_n$ for $1\leq i\leq n$. Dimensionality reduction is then achieved via the eigenmap that maps each $x_j\in\mathbb{R}^D$ to a point in $\mathbb{R}^m$ defined in terms of the $j$-th coordinate of the first $m$ non-trivial eigenvectors: $$x_j\mapsto(\phi_2^n(j),\ldots,\phi^n_{m+1}(j))\in\mathbb{R}^m.$$ (The first eigenvector is omitted because $\lambda_1$ is always zero and $\phi_1^n$ is a constant vector.)

Diffusion maps \citep{coifman:diffusionMaps2006} build off of Laplacian eigenmaps by considering a Markov normalized diffusion operator $\mathbf{P}_n$ that describes the transitions of a random walker over the vertices (after a suitable renormalization of the edge weights) and can also be thought of as a discrete approximation of an underlying heat kernel. The diffusion map is then defined by 
$$x_j\mapsto(\omega_2^t\psi_2^n(j),\ldots,\omega_{m+1}^t\psi^n_{m+1}(j))\in\mathbb{R}^m,$$
where $\mathbf{P}_n\psi_i^n=\omega_i\psi_i^n$, and $t$ is a parameter referred to as diffusion time.

In order to justify the intuition that diffusion maps, Laplacian eigenmaps,  and other related algorithms capture the intrinsic geometry of the data, one may aim to prove that the graph Laplacian $\mathbf{L}_n$ is consistent in the sense that it converges to a limiting operator defined on the underlying manifold $\mathcal{M}$ as the number of data points tends to infinity. Typically, one views $\mathbf{L}_n$ as discrete approximation of a differential operator $\mathcal{L}$ such as the (negative) Laplace-Beltrami Operator, $-\Delta=-\text{div}\circ\nabla$, where $\nabla$ is the intrinsic gradient and $\text{div}$ is the corresponding divergence operator.
One then attempts to show   that the eigendecomposition of $\mathbf{L}_n$ approximates the eigendecomposition of $\mathcal{L}$, $\mathcal{L}\phi_i=\lambda_i\phi_i$, in the sense that 
\begin{equation}\label{eqn: spectral convergence}
\lim_{n\rightarrow\infty} \lambda_i^{n} = \lambda_i,\quad \text{and}\quad 
\lim_{n\rightarrow\infty} \|\phi_i^{n}-P_n\phi_i\|_2=0,
\end{equation}
 where  $P_n$ is the normalized evaluation operator defined by 
\begin{equation}\label{eqn: Pn}
P_nf(i)=\frac{1}{\sqrt{n}}f(x_i),
\end{equation}
for continuous functions $f$. 

Results along the lines of \eqref{eqn: spectral convergence}, often with quantitative rates of convergence, as well as pointwise results such as $$\lim_{n\rightarrow\infty} \mathbf{L}_nP_nf(i)=\mathcal{L}f(x_i),$$  have been established in numerous works such as \citet{von2008consistency, trillos2018variational, dunson2021spectral, cheng2022eigen, Calder2019,belkin2008towards,gine2006empirical, dunlop2020large, calder2019consistency}.

\subsection{Signal Processing and Spectral Convolution on Manifolds}\label{sec: background manifold convolution}

In this section, we will discuss the extension of convolutional operators to manifolds. While many possible solutions to this problem have been proposed including methods based on parallel transport \citep{schonsheck2022parallel}, local patches \citep{masci2015geodesic,boscaini2015learning,Masci:geoCNN2015}, or Fr{\'e}chet means \citep{chakraborty2018manifoldnet}, here we will focus on spectral convolutional methods that are the manifold analogue of the graph signal processing methods discussed in Section \ref{sec: background gsp}.

 Similarly to Section \ref{sec: background manifold learning}, we consider a  compact, connected, $d$-dimensional Riemannian manifold without boundary $\mathcal{M}$ embedded in $\mathbb{R}^D$, $D\gg d$. 
We will let $\mu$ be a probability distribution on $\mathcal{M}$ with a density  $\rho: \mathcal{M}\rightarrow[0,\infty)$ (with respect to the Riemannian volume form). As in \citet{Calder2019},  we will assume throughout that there exist constants $\rho_{\text{min}}$ and $\rho_{\text{max}}$ such that $
0<\rho_{\text{min}}\leq \rho \leq \rho_{\text{max}}<\infty
$
and that $\rho$ is at least $\mathcal{C}^{2,\nu},$ i.e., that its second derivatives are at least $\nu$-H\"older continuous.
We  
define an $L^2$ norm and inner product by  $$
\|f\|_{L^2(\mathcal{M})}^2=\int_\mathcal{M}|f|^2d\mu<\infty,\quad
\text{and}\quad 
\langle f,g\rangle_{L^2(\mathcal{M})}=\int_\mathcal{M} fgd\mu,
$$
and let $L^2(\mathcal{M})$ denote the set of functions such that  $\|f\|_{L^2(\mathcal{M})}$ is finite.
 (Other $L^p(\mathcal{M})$ norms are defined similarly, i.e., $
\|f\|_{L^p(\mathcal{M})}^p=\int_\mathcal{M}|f|^pd\mu,
$ for $1\leq p<\infty$ and $\|f\|_{L^\infty(\mathcal{M})}$ is defined as the essential supremum.)

 We let $\mathcal{L}$ denote a differential operator, which is to be thought of as a (weighted) manifold Laplacian, and assume that 
$\mathcal{L}$ is symmetric in the sense that $\langle \mathcal{L} f,g\rangle_{L^2(\mathcal{M})}=\langle  f,\mathcal{L} g\rangle_{L^2(\mathcal{M})}$ for all  smooth functions $f$ and $g$. We will assume that $\mathcal{L}$ has an orthonormal (with respect to $\langle\cdot,\cdot\rangle_{L^2(\mathcal{M})}$) basis of eigenfunctions $\mathcal{L}\phi_i=\lambda_i\phi_i$,  $\lambda_1\leq \lambda_2\leq\ldots$, $1\leq i <\infty$. Crucially, we note that if $\rho$ is a non-uniform (i.e., non-constant) density, then we cannot chose $\mathcal{L}$ to be the Laplace Beltrami operator $-\Delta=-\text{div}\circ\nabla$ since $-\Delta$ is not symmetric with respect to the inner product $\langle f, g\rangle_{L^2(\mathcal{M})}=\int_\mathcal{M} fgd\mu$. Instead we will consider operators such as $\mathcal{L}f=-\frac{1}{\rho}\text{div}(\rho^2 \nabla f)$, which can be verified to be symmetric on $L^2(\mathcal{M})$. 

Analogous to the graph signal processing methods discussed in Section \ref{sec: background gsp}, we may now  
  define a generalized Fourier series by setting
$
\widehat{f}(i)=\langle f,\phi_i \rangle_{L^2(\mathcal{M})},
$
and obtain the Fourier inversion formula 
$$
f(x)=\sum_{i=1}^\infty \langle \phi_i , f\rangle_{L^2(\mathcal{M})} \phi_i(x)=\sum_{i=1}^\infty \widehat{f}(i)\phi_i(x).
$$
Notably, in the case where $\mathcal{M}=S^1$ is the unit circle embedded in two-dimensional space, and $\rho$ is the uniform density, this generalized Fourier series reduces to the classical Fourier series widely used in signal processing for periodic functions defined on the real line (which can be interpreted as functions defined on the circle). 

Given the generalized Fourier series, one may then define convolution as multiplication in the Fourier domain, i.e.,
$$
(f\star g)(x)=\sum_{i=1}^\infty \widehat{f}(i)\widehat{g}(i)\phi_i(x).
$$
Additionally, given a function $w:[0,\infty)\rightarrow\mathbb{R}$, with $\|w\|_\infty<\infty$, we may define a spectral convolution operator, $w(\mathcal{L}):L^2(\mathcal{M})\rightarrow L^2(\mathcal{M})$ by 
\begin{equation}\label{eqn: specconvcontinuum} w(\mathcal{L})f=\sum_{i=1}^\infty w(\lambda_i) \widehat{f}(i) \phi_i.
\end{equation}

We note that in the case where  $\sum_{i=1}^\infty |w(\lambda_i)|^2<\infty$, applying the operator $w(\mathcal{L})$ equivalent to convolving $f$ with the function $g$ such that $\widehat{g}(i)=w(\lambda_i)$. Furthermore, we note, as was observed in Remark 1 of \citet{chew2022geometric}, that $w(\mathcal{L})$ is independent of the choice of eigenbasis. To see this, we let $\Lambda$ denote the set of distinct eigenvalues of $\mathcal{L}$ and, for $\lambda\in \Lambda$, we let $\pi_\lambda$ denote the operator which projects each function onto the corresponding eigenspace $E_\lambda$. We may then write
$$
w(\mathcal{L})f=\sum_{i=1}^\infty w(\lambda_i) \widehat{f}(i) \phi_i=\sum_{\lambda\in\Lambda}w(\lambda)\left(\sum_{\lambda_i=\lambda} \widehat{f}(i) \phi_i\right)=\sum_{\lambda\in\Lambda}w(\lambda)\pi_\lambda(f).
$$
We note that for a majority of this paper, we will use the letter ``$w$" to denote a spectral filter $w(\mathcal{L})$. This is inspired by the wavelets used in several of the MFCN models we consider in Section \ref{sec: results}. However, we may use other letters depending on context, and, in particular, we will write $a(\mathcal{L})$ if we want to emphasize that the spectral filter is a low-pass filter (i.e., an averaging operator).

\subsection{Neural Networks on Graphs and Manifolds}\label{sec: GNNs and MNNs}

Using the notion of spectral convolution discussed in Section \ref{sec: background gsp}, one can define a simple spectral GNN of the form
$$\mathbf{x}^{(\ell+1)}=\sigma\left(w^{(\ell)}(\mathbf{L})\mathbf{x}^{(\ell)}\right)$$
where $\sigma$ is a pointwise, nonlinear activation function and $w^{(\ell)}(\mathbf{L})$ is defined as in \eqref{eqn: simple graph convolutional filter}.
If one is given multiple initial graph signals $\mathbf{x}_1,\ldots, \mathbf{x}_C$ organized into a data matrix $\mathbf{X}=(\mathbf{x}_1,\ldots,\mathbf{x}_C)$ and uses multiple filters in each layer, then the layerwise update rule can be extended to 
\begin{equation}\label{eqn: CNN style GNN}
\mathbf{x}^{(\ell+1)}_k=\sigma\left(\sum_{j=1}^C w^{(\ell)}_{j,k}(\mathbf{L})\mathbf{x}^{(\ell)}_k\right).
\end{equation}
Similarly, in the manifold case, if one is given $C$ input functions $f_1,\ldots,f_C:\mathcal{M}\rightarrow \mathbb{R}$, then a spectral MNN can be defined with a layerwise update rule of 
\begin{equation}\label{eqn: simple MNN}
f^{(\ell+1)}_k=\sigma\left(\sum_{j=1}^C w^{(\ell)}_{j,k}(\mathcal{L})f^{(\ell)}_k\right),
\end{equation}
where $\mathcal{L}$ is a manifold Laplacian such as the Laplace-Beltrami operator.  
In either \eqref{eqn: CNN style GNN} or \eqref{eqn: simple MNN}, if each filter $w^{(\ell)}_{j,k}$ belongs to a parameterized family of functions such as Chebyshev polynomials, one can then attempt to learn the optimal parameters from training data.

However, many popular graph neural networks take a different approach. Rather than using multiple learnable filters for each input channel and then summing across channels, they instead filter each graph signal with a pre-designed operator (or operators) and then learn relationships between the filtered input signals. For example, the Graph Convolutional Network (GCN)\footnote{Here, we use the term GCN to refer to the specific network introduced in \citet{kipf2016semi}. We will use the term GNN to refer to a general graph neural network} \citep{kipf:classGCNN2017} performs a predesigned aggregation 
$
\mathbf{X}\rightarrow \widehat{\mathbf{A}}\mathbf{X}
$, where $\widehat{\mathbf{A}}=(\mathbf{D}+\mathbf{I})^{-1/2}(\mathbf{A}+\mathbf{I})(\mathbf{D}+\mathbf{I})^{-1/2}$ and utilizes a right-multiplication by a trainable weight matrix $\Theta$ to learn relationships between the channels\footnote{The matrix $\widehat{\mathbf{A}}$ can be obtained by applying the polynomial $h(\lambda)=1-\lambda/2$ to a normalized version of the graph Laplacian and then some adjustments which help with the training of the network. Therefore, we can essentially think of the operation $\mathbf{x}\rightarrow\widehat{\mathbf{A}}\mathbf{x}$ as a spectral convolution.}. This leads to the layerwise update rule
$
\mathbf{X}^{(\ell+1)}=\sigma(\widehat{\mathbf{A}}\mathbf{X}^{(\ell)}\Theta^{(\ell)}).$

GCNs, and other similar networks, can be thought of as \emph{Aggregate-Combine} networks \citep{xu2018how} since each layer consists of two linear transformations. First, the aggregation operator performs an operation similar to $\mathbf{X}^{(\ell)}\rightarrow\widehat{\mathbf{A}}\mathbf{X}^{(\ell)}$ which aggregates information over each graph neighborhood. This operator effectively performs local averaging and makes the hidden representation of each vertex more similar to its immediate neighbors.  Second, the combine operation, e.g., $\widehat{\mathbf{A}}\mathbf{X}^{(\ell)}\rightarrow\widehat{\mathbf{A}}\mathbf{X}^{(\ell)}\Theta^{(\ell)}$ then learns novel combinations of the input channels.  In the following section, we will introduce a class of manifold neural networks which extends the aggregate-combine framework to the manifold setting. However, in our analogue of the aggregate step, we will instead utilize more general operators defined in the spectral domain. (From the spectral perspective, the aggregation operators utilized in GCNs can be seen as low-pass filters \citep{nt2019revisiting}.) In this manner, our method will be utilizing ideas from both spectral networks as well as from aggregate-combine networks.

\section{Manifold Filter Combine Networks}\label{sec: MFCN}

We now introduce a novel framework,  which we refer to as the \emph{filter-combine} paradigm, for thinking about manifold neural networks constructed via spectral convolutional operators as defined as in \eqref{eqn: specconvcontinuum}.
 This terminology parallels the aggregate-combine framework introduced in \citet{xu2018how} to understand GNNs. (See Section \ref{sec: GNNs and MNNs}.) 
 Notably, we use the term ``filter'' rather than ``aggregate'' because our filters  are not required to be localized averaging operations such as those used in common message-passing GNNs.  

We will assume that our input data is a row-vector\footnote{We define the output of $F$ to be $\mathbb{R}^{1\times C}$ in order to highlight the parallels with the data matrices commonly considered in the GNN literature where rows correspond to vertices and columns correspond to features.} valued function $F\in L^2(\mathcal{M},\mathbb{R}^{1\times C})$, $F=(f_1,\ldots,f_{C})$, where each $f_i\in L^2(\mathcal{M})$ can is interpretted as a feature (or an input channel).

Each hidden layer of the network will consist of the following five steps: 
(i) We \textit{Filter} each input feature $f_k$ by a family of spectral operators $w_{j,k}(\mathcal{L})$, $1\leq j\leq J$. (ii) For each fixed $j$, we  \textit{Combine Features} so that the filtered feature functions $\tilde{f}_{j,k}=w_{j,k}(\mathcal{L})f_k$ get combined into $C'$ new feature functions $g_{j,k}$, where each $g_{j,k}$, $1\leq k \leq C'$, is a linear combination of the $\tilde{f}_{j,k}$.   (iii) Then, for each fixed $k$, we \textit{Combine Filters}, i.e., we map the $\{ g_{j,k}\}_{j=1}^J$ to $\{\tilde{g}_{j,k}\}_{j=1}^{J'}$ where each $\tilde{g}_{j,k}$, $1\leq j\leq J'$, is a linear combination of the $ g_{j,k}$.  (iv) We apply some non-linear, nonexpansive pointwise \textit{Activation} function $\sigma$ to each of the $\tilde{g}_{j,k}$,  to obtain $h_{j,k}=\sigma\circ \tilde{g}_{j,k}$. (v) Lastly, we \textit{Reshape} the $\{h_{j,k}\}_{1\leq j \leq J',1\leq k\leq C'}$ into $\{f'_i\}_{i=1}^{\tilde{C}}$, where $\tilde{C}=C'J'$. 

Explicitly, we define the layerwise update rules as follows. Given a function
$F^{(\ell)}=(f_1^{(\ell)},\ldots,f_{C_\ell}^{(\ell)})$, with $C_\ell$ input features (channels), and a filter bank, $w_{j,k}^{(\ell)}$,  $1\leq j\leq J_\ell$, of $J_\ell$ filters for each feature, we define $F^{(\ell+1)}=(f_1^{(\ell+1)},\ldots,f_{C_{\ell+1}}^{(\ell+1)})$ by:

\begin{align*}
\text{Filter:}&\quad
\tilde{f}^{(\ell)}_{j,k}=w^{(\ell)}_{j,k}(\mathcal{L})f^{(\ell)}_k,\quad 1\leq j \leq J_\ell, 1\leq k\leq C_\ell\\
\text{Combine Features:}&\quad
g_{j,k}^{(\ell)}=\sum_{i=1}^{C_{\ell}}\tilde{f}^{(\ell)}_{j,i}\theta^{(\ell,j)}_{i,k},\quad 1\leq j\leq J_\ell, 1\leq k \leq C'_\ell\\
\text{Combine Filters:}&\quad
\tilde{g}^{(\ell)}_{j,k}= \sum_{i=1}^{J_\ell} \alpha^{(\ell,k)}_{j,i}g_{i,k},\quad 1\leq j\leq J_\ell',1\leq k\leq C'_\ell\\
\text{Activation:}&\quad
h_{j,k}^{(\ell)}=\sigma^{(\ell)}\circ \tilde{g}_{j,k}^{(\ell)},\quad 1\leq j\leq J_\ell', 1\leq k \leq C'_\ell\\
\text{Reshaping:}&\quad
f^{(\ell+1)}_{(j-1)C_{\ell}+k} = h^{(\ell)}_{j,k}, \quad 1\leq j\leq J_\ell',1\leq k\leq C'_\ell.
\end{align*}
We note that this results in  $C_{\ell+1}=J'_\ell C_{\ell}'$ features to be input into the next layer and that we set  $F^{(0)}=F,C_0=C$.
Above, each of the $\alpha_{j,k}^{(\ell,k)}$ and $\theta_{i,k}^{(\ell,k)}$  are scalars, which, if desired, can be taken to be trainable parameters. A graphical depiction of this sequence of operations (using the numerical method introduced in Section \ref{sec: MFCN discrete implementation}) is given in Figure \ref{fig:architecture}. We remark that the first three steps are largely inspired by the Aggregate and Combine steps used in GNNs. In particular, the Filter step is meant to parallel the Aggregate step, which can be thought of as a low-pass filtering  of the node features \citep{nt2019revisiting}, although here, we also allow for other types of filters, such as high-pass filters or wavelets (band-pass filters). The next step, Combine Features, learns new features which are combinations of the original input features, similar to the Combine step in GNNs. The Combine-Filters step is inspired by the work of \citet{zarka2020separation} which we will further discuss in Section \ref{ex: learnable scattering}. It allows the network to learn new filters, which are combinations of those used in the first step. The Activation step is a standard feature of most neural networks and the Reshaping step is merely included so that each layer both inputs and outputs a row-vector valued function, allowing for multiple layers to be stacked upon each other.

We shall refer to networks constructed using the layers above as Manifold Filter-Combine Networks (MFCNs). 
As we  discuss below, the MFCN framework naturally lends itself to the construction of 
many different subfamilies of networks. For instance, rather than allowing different filters $w_{j,k}(\mathcal{L})$ for each channel, one could require that the $w_{j,k}(\mathcal{L})$ are the same for all input channels $f_k$ (i.e., $w_{j,k}(\mathcal{L})=w_j(\mathcal{L})$). Indeed, this will be the case for the networks considered in Sections \ref{ex: mcn}, \ref{ex: scat}  and \ref{ex: learnable scattering} below, and may in some settings facilate learning by reducing the number of trainable parameters. Furthermore, one could use filters which are \emph{learned}, analogous to ChebNets \citep{Defferrard2018} (discussed below in Section \ref{ex: chebnet}) and CayleyNets \citep{Levie:CayleyNets2017}, or one could use filters which are designed in advance and allow the network to learn via combining features and/or filters. The latter approach parallels the updates used in GCNs \citep{kipf2016semi} where each feature is effectively averaged  over local neighborhoods (i.e., smoothed) and then the network learns novel combinations of these features.  

We note one may effectively omit the Combine-Features step by setting $C_\ell'=C_\ell$ and setting each matrix $\Theta^{(\ell,j)}:=(\theta_{i,k}^{(\ell,j)})_{1\leq i,k\leq C_\ell}$ equal to the identity. 
Similarly, one may omit  the Combine-Filters step by setting $J_\ell=J_\ell'$ and setting the matrices whose entries are $(\alpha_{j,i}^{(\ell,k)})_{1\leq i,j\leq J_\ell}$ to the identity. 

Additionally, we note that because of the flexibility of our framework it is possible to write the same network as an MFCN in more than one way. 
For instance, if one omits the Combine-Filters step, uses a shared filter bank $\{w^{(\ell)}(\mathcal{L})_j\}_{1\leq j \leq J}$ (independent of $k$) and chooses the Combine-Features step to be independent of $j$ (i.e., setting $\theta_{i,k}^{(\ell,j)}=\theta_{i,k}^{(\ell)}$), then the corresponding MFCN layerwise update is
$$
f^{(\ell+1)}_{(j-1)C_\ell+k} = \sigma^{(\ell)}\left(\sum_{i=1}^{C_\ell}w^{(\ell)}(\mathcal{L})_j\theta^{(\ell)}_{i,k}f_i\right),
$$
which may also be obtained by using filters of the form 
$\widetilde{w}^{(\ell)}_{(j-1)C_\ell+k,i}(\mathcal{L})=w_{j}(\mathcal{L})\theta^{(\ell)}_{i,k}$ and using a Combine-Features step with $\tilde{\theta}_{i,k}^{(\ell,j)}=1$.

\begin{figure}[htbp]
    \centering
    \includegraphics[width=\linewidth]{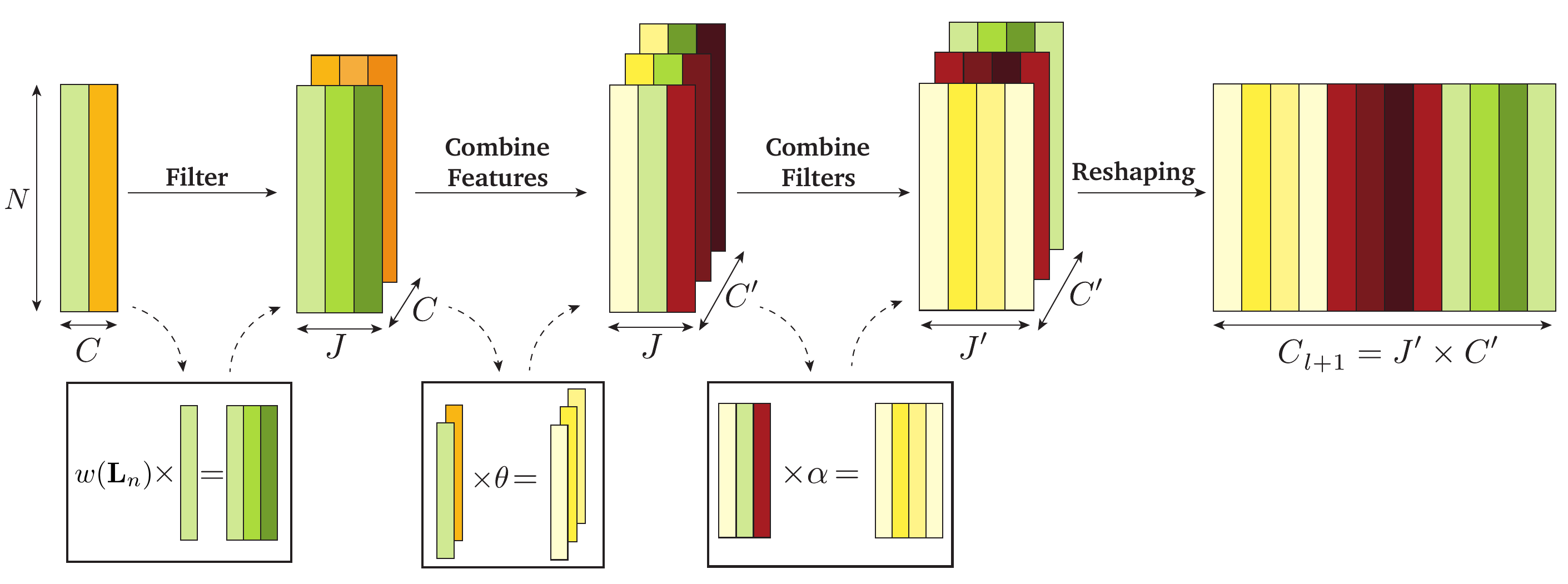}
    \caption{Illustration of Manifold Filter-Combine Networks steps. Starting from a $C$-dimensional row vector-valued function evaluated at $N$ points, the MFCN layer in turn filters, combines features, combines filters, applies a point-wise nonlinearity, and reshapes. (For conciseness, we do not visualize the activation step). }
    \label{fig:architecture}
\end{figure}

\subsection{Examples of MFCNs}\label{sec:mfcn examples}

We now discuss several examples of different subfamilies of networks which may be constructed and analyzed via the MFCN framework. These families differ in the types of filter banks utilized and in which aspects of the MFCN layer are chosen to be learnable. The flexibility of the MFCN framework enables rational design of MFCN architectures to incorporate filters, cross-feature relationships, and cross-filter relationships that are better suited for particular applications. Additionally, each part of the network may be chosen to be learnable, for improved expressivity, or predesigned, to reduce the number of trainable parameters.

\subsubsection{Manifold GCNs}\label{ex: mcn}

We first construct a simple MFCN paralleling the popular GCN  introduced in \citet{kipf2016semi}. Here, we will use a single filter $\widetilde{\mathcal{A}}=a(\mathcal{L})$, which means that $J_\ell=1$ for all layers. Furthermore, we choose the associated function  $a(\lambda)$ to be decreasing, so that $\widetilde{\mathcal{A}}$ may be thought of as a low-pass filter.
 Since there is only one filter, we omit the Combine-Filters step, and  and let the matrix $\Theta^{(\ell)} = (\theta^{(\ell,1)}_{i,k})_{1\leq i\leq C_\ell,1\leq k \leq C'_\ell}$ be a learnable weight matrix. Then our layerwise update rule becomes $f_k^{(\ell+1)}=\sigma \left ( \sum_{i=1}^{C_\ell}\theta_{i,k}^{(\ell,1)}\widetilde{A}f_k\right )$, which may be written compactly as 
$F^{(\ell+1)}=\sigma\left(\widetilde{A}F^{(\ell)}\Theta^{(\ell)}\right)$.
As is the case with the GCN, this network is designed to apply smoothing operations with the low-pass filtering, which may help in noisy settings, and is able to learn relationships between the features via the Combine-Features step.

\subsubsection{Manifold ChebNets}\label{ex: chebnet}

We next construct an MFCN with a more expressive filter bank, analogous to ChebNet \citep{Defferrard2018}. One such approach is to constrain each filter $w_{j,k}(\mathcal{L})$ to be a polynomial of $\mathcal{L}$ parameterized in the Chebyshev basis. However, a potential drawback of this approach is that the eigenvalues of $\mathcal{L}$ are unbounded. Thus, we instead consider polynomials of the heat kernel $\mathcal{P}_t=e^{-t\mathcal{L}}$ for some fixed value of $t$. We note that $\mathcal{P}_t$ has the same eigenfunctions as $\mathcal{L}$ and that its eigenvalues are given by $\omega_k=e^{-t\lambda_k}$. Therefore, polynomials of $\mathcal{P}_t$ are still spectral filters of the form \eqref{eqn: specconvcontinuum}. 

We now design a network where the filters take the form $p_{j,k}(\mathcal{P}_t)$, where each $p_{j,k}$ is a polynomial. We omit the Combine-Features step by setting each of the matrices $\Theta^{(\ell,j)}:=(\theta_{i,k}^{(\ell,j)})_{1\leq i,k\leq C_\ell}$ equal to the identity, and perform a simple combination of the filters $\alpha_{j,i}^{(\ell,k)}=1$, hence obtaining the layerwise update rule $f^{(\ell+1)}_k=\sigma\left(\sum_{i=1}^{C_\ell} p_{i,k}(\mathcal{P}_t)f_i^{(\ell)}\right).$ As in ChebNet, we write each of the $p_{i,k}$ in the Chebyshev basis and treat the basis coefficients as trainable parameters. (Since the $\omega_k$ lie in the interval $[0,1]$,  one should either expand the polynomials in the shifted Chebyshev basis for functions defined on $[0,1]$ or alternatively replace $\mathcal{P}_t$ by $2\mathcal{P}_t-1$ so that its eigenvalues lie in $[-1,1]$.) Since the resulting network has more flexible filters it may be able to learn more general functions than the GCN style network considered in the previous example. 

\subsubsection{Hand-crafted Scattering Networks}\label{ex: scat}

The manifold scattering transform \citep{perlmutter:geoScatCompactManifold2020,chew2022manifold} is a wavelet-based method for deep learning on manifolds inspired by analogous constructions for Euclidean data such as images \citep{mallat:scattering2012, czaja:timeFreqScat2017, nicola2022stability, bruna2013invariant, grohs:cnnCartoonFcns2016, wiatowski:mathTheoryCNN2018} and for graphs \citep{gama:diffScatGraphs2018,gama:stabilityGraphScat2019,zou:graphCNNScat2018,gao:graphScat2018,perlmutter2019understanding, bodmann2022scattering}.
Like the original Euclidean scattering transform \citep{mallat:firstScat2010, mallat:scattering2012}, it is a hand-crafted model, enabling analysis of the stability and invariance properties of deep neural networks. 

To understand the manifold scattering transform via our MFCN framework, we omit both the Combine-Features and the Combine-Filters steps
     and consider a family of wavelet filters $\{w_j(\lambda)\}_{j=1}^J$, where each $w_j$ is chosen to be a band-pass filter (a function whose support is concentrated in an interval) such as $$w_j(\lambda)=e^{-2^{j-1}\lambda}-e^{-2^j\lambda}
    .$$ The hidden representation of each input $f_k$ in the $\ell$-th layer takes the form
$$U[j_1,\ldots,j_\ell]f_k=\sigma(W_{j_\ell}\sigma(W_{j_{\ell-1}}\sigma(\ldots W_{j_1}f_k))\ldots),$$
 where we write $W_j$ in place of $w_j(\mathcal{L})$ for ease of notation.
 
 When used for tasks such as classification, one typically concatenates all of the $U[j_1,\ldots,j_\ell]f_k$ (using each of the possible combinations of $j_1,\ldots,j_\ell$) for $\ell$ up to some $L$, typically $L=2$ or $3$, and this hidden representation is fed into another classifier such as a multilayer perceptron. (Therefore, it may be thought of as modeling a neural network with skip connections.)  Depending on the task, one may also choose to apply low-pass filtering, global averaging, or other aggregations, to the $U[j_1,\ldots,j_\ell]f_k$ before applying the final classifier.

\subsubsection{Learnable Scattering Networks}\label{ex: learnable scattering}

Since the scattering transforms discussed above are hand-crafted networks, they are naturally well-suited for unsupervised learning or low-data environments (when paired with an auxiliary classifier or regressor).  However, in many settings, the hand-crafted design can limit the power of such networks. Therefore, for both Euclidean data and graphs, there have been a variety of works that incorporating learning into the scattering framework.

In the Euclidean setting, \citet{oyallon:scalingScattering2017} created a network that acts as a hybrid of the scattering transform and a CNN using predesigned, wavelet filters in some layers and learnable filters in others. Subsequent work by \citet{zarka2020separation} introduced learning by allowing the network to learn linear combinations of the wavelets. Notably, this is the inspiration for our Combine-Filters step. To construct an analogous MFCN, one may utilize a predesigned (wavelet) filter bank $\{w_j(\mathcal{L})\}_{j=1}^J$ which is shared across all channels, omit the combine step, and let the $\alpha_{j,i}^{(\ell,k)}$ be learnable. (Traditionally, scattering networks have used $\sigma(x)=|x|$ as the activation function, but one could readily use other choices instead.)

In the graph setting, \citet{wenkel2022overcoming} incorporated learning into the scattering framework by utilizing predesigned wavelet filters, but with learnable matrices that combined the input features (along with a few other features to boost performance). In a different approach,  \citet{tong2022learnable} sought to relax the graph scattering transform by replacing dyadic wavelets, e.g., $w(\lambda)=\lambda^{2^{j-1}}-\lambda^{2^j}$, with more general wavelets $w(\lambda)=\lambda^{t_{j-1}}-\lambda^{t_j}$, where $t_j$ is increasing sequence of scales $t_j$ which is learned  via a differentiable selector matrix. (The wavelets from \citet{tong2022learnable} are implemented in terms of a lazy random walk matrix $\mathbf{P}$, whose eigenvalues lie in $[0,1]$. This matrix can be thought of as a spectral filter since it can be written as a polynomial of a normalized graph Laplacian.)

To obtain an analogous MFCN, we can omit the Combine-Features step, and set $a_j(\lambda)=e^{-j\lambda}$ for $0\leq j \leq J$. These filters diffuse the input signal over the manifold at different time-scales. We could then learn relationships between the diffusion scales via combinations of the filters $a_j(\lambda)$ (where the filter combinations utilized in \citet{tong2022learnable} have a certain structure that encourages the network to behave in a wavelet-like manner).

\subsection{Implementation on Point Clouds}\label{sec: MFCN discrete implementation}

In many applications of interest, one does not know the manifold $\mathcal{M}$. 
Instead, as discussed in Section \ref{sec: background manifold learning}, one is given access to finitely many sample points $x_1,\ldots,x_n\in\mathbb{R}^D$ and makes the modeling assumption that these sample points satisfy the manifold hypothesis, i.e., that they lie upon (or near) an unknown $d$-dimensional Riemannian manifold for some $d\ll D$. In this setup, it is non-trivial to actually implement a neural network since one does not have global knowledge of the manifold. 

We will use an approach based on the manifold learning methods discussed in Section \ref{sec: background manifold learning} where we construct a data-driven graph $G_n$, whose vertices are the sample points $x_1,\ldots,x_n$, and use the eigenvectors and eigenvalues of the graph Laplacian $\mathbf{L}_n$ to approximate the eigenfunctions and eigenvalues of the manifold Laplacian $\mathcal{L}$. There are numerous methods for constructing $G_n$, but here we will focus on $k$-NN graphs and $\epsilon$-graphs. (As we shall discuss below, the precise form of the manifold Laplacian $\mathcal{L}$ will be different for these two graph constructions.)

\begin{definition}[$\epsilon$-graphs]\label{ex: eps}
We let $\epsilon>0,$ and place an edge between two vertices $x_i$ and $x_j$ if $\|x_i-x_j\|_2<\epsilon$. We then use a weighting function $\eta:[0,\infty)\rightarrow [0,\infty)$ to assign edge weights.
Following the lead of \citet{Calder2019}, we assume that $\eta$ is a non-increasing function such that $\eta(1/2)>0=\eta(1)$ and will also assume that $\eta$  is Lipschitz continuous on $[0,1]$. We then constructed a weighted adjacency matrix by 
$$
[\mathbf{A}_{n,\epsilon}]_{i,j}=\eta\left(\frac{\|x_i-x_j\|_2}{\epsilon}\right),
$$
and let $\mathbf{D}_{n,\epsilon}$ be the corresponding degree matrix. 
The $\epsilon$-graph Laplacian is then given by 
$$
\mathbf{L}_n=\mathbf{L}_{n,\epsilon}=\frac{1}{c_\eta n\epsilon^{d+2}}\left(\mathbf{D}_{n,\epsilon}-\mathbf{A}_{n,\epsilon}\right),
$$
where 
$c_\eta = \int_{\mathbb{R}^d} |y(1)|^2 \eta(\|y\|_2)dy,$
 is a normalizing constant needed in order to ensure that the eigenvalues of $\mathbf{L}_{n,\epsilon}$ converge to the correct limit (and $y(1)$ is the first coordinate of the point $y\in\mathbb{R}^d$). 
\end{definition}

The graph Laplacians of $\epsilon$-graphs are sparse by construction, and their sparsity is indirectly controlled by the length scale parameter $\epsilon$. To more directly control the sparsity of the graph Laplacian in an adaptive manner without specifying a length scale, one may also consider $k$-NN graphs. 

\begin{definition}[$k$-NN graphs]\label{ex: knn}
For a positive integer $k$, symmetric $k$-Nearest Neighbor ($k$-NN) graphs are constructed by placing an edge between $x_i$ and $x_j$ if $x_j$ is one of the $k$ closest  points to $x_i$ (with respect to the Euclidean distance) or if $x_i$ is one of the $k$ closest points to $x_j$.
To assign edge weights, we again follow the lead of \citet{Calder2019}, and let $\epsilon_k(x_i)$ denote the distance from $x_i$ to its $k$-th closest neighbor. We then let  $r_k(x_i,x_j) := \max\{\epsilon_k(x_i),\epsilon_k(x_j)\}$, and define a weighted adjacency matrix by
$$[\mathbf{A}_{n,k}]_{i,j} = \eta \left ( \frac{|x_i - x_j|}{r_k(x_i,x_j)}\right ),$$
where we set $[\mathbf{A}_{n,k}]_{i,j}=0$ if there is no edge between $x_i$ and $x_j$, and we make the same assumptions on $\eta$ as in the $\epsilon$-graph case.  The $k$-NN graph Laplacian is then given by 
$$
\mathbf{L}_n=\mathbf{L}_{n,k}=\frac{1}{c_\eta n}\left(\frac{nc_d}{k}\right)^{1+2/d}\left(\mathbf{D}_{n,k}-\mathbf{A}_{n,k}\right),
$$
where $c_d$ is the volume of the $d$-dimensional Euclidean unit ball, $c_\eta$ is the same as in Definition \ref{ex: eps}, and $\mathbf{D}_{n,k}$ again the degree matrix corresponding to $\mathbf{A}_{n,k}$.
\end{definition}

Below, in Section \ref{sec: convergence}, we will recall a result from \citet{Calder2019} which shows that if $G_n$ is a graph constructed as in Definition \ref{ex: eps} or \ref{ex: knn} the the eigendecomposition of $\mathbf{L}_n$ will converge to the eigendecomposition of a manifold Laplacian $\mathcal{L}$. However, we note that the limiting operator is not the same for both graph constructions. Specifically, in the case where $G_n$ is an $\epsilon$ graph, the limiting operator is $\mathcal{L}f=-\frac{1}{2\rho}\text{div}(\rho^2\nabla f)$, whereas if $G_n$ is a $k$-NN graph, then the limiting operator is $\mathcal{L}f=-\frac{1}{2\rho}\text{div}(\rho^{1-2/d}\nabla f)$.
 
For an input signal $f_k$, we will let $\mathbf{x}_k=P_nf_k$ be the vector obtained by evaluating $f_k$ at the the data points $x_i$ and performing appropriate normalization. (See Equation \ref{eqn: Pn}.)  Given the graph Laplacian $\mathbf{L}_n$, constructed as in either Definition \ref{ex: eps} or \ref{ex: knn}, we let $\{\phi_i^n\}_{i=1}^n$ be an orthonormal  basis  of eigenvectors, 
$   \mathbf{L}_n \phi_i^n = \lambda_i^n \phi_i^n,$ $0=\lambda_1^n\leq \lambda_2^n\leq \ldots\leq\lambda_n^n$, and expand each $\mathbf{x}_k$ in the Fourier basis    $$
\mathbf{x}_k=\sum_{i=1}^n \widehat{\mathbf{x}_k}(i) \phi_i^n, \quad \widehat{\mathbf{x}_k}(i)=\langle \mathbf{x}_k,\phi^n_i\rangle_2.
$$

In our implementation, we will assume  either that each of the input signals $f$ or each of the spectral filters $w$ are $\kappa$-bandlimited as defined below.

\begin{definition}[Bandlimited functions and bandlimited spectral filters] Let 
$f\in L^2(\mathcal{M})$, let $w(\mathcal{L})$ be a spectral filter, and let $\kappa$ be a positive integer.  We say that $f$ is $\kappa$-bandlimited if $\widehat{f}(i)=0$ for all $i>\kappa$. Similarly, $w(\mathcal{L})$ is said to be $\kappa$-bandlimited if $w(\lambda_i)=0$ for all $i>\kappa.$
\end{definition}
 
We then define a discrete approximation of $w(\mathcal{L})$  by 
\begin{equation}\label{eqn: filtering discrete}
w(\mathbf{L}_n)\mathbf{x}_k=\sum_{i=1}^\kappa w(\lambda^n_i) \widehat{\mathbf{x}_k}(i) \phi^n_i.
\end{equation}
We may then implement an MFCN layer, but with \eqref{eqn: filtering discrete} in place of the filtering step. 
Specifically, we assume we have an  $n\times C$ data matrix $\mathbf{X}=(\mathbf{x}_1,\ldots,\mathbf{x}_C)$, where $\mathbf{x}_k=P_nf_k$, where as before, $P_nf\in\mathbb{R}^n$ is the vector defined by $P_nf(i)=\frac{1}{\sqrt{n}}f(x_i)$. We may then apply the following discrete update rules paralleling those theoretically conducted in the continuum.
\begin{align*}
\text{Filter:}&\quad
\tilde{\mathbf{x}}^{(\ell)}_{j,k}=w^{(\ell)}_{j,k}(\mathbf{L}_n)\mathbf{x}^{(\ell)}_k,\quad 1\leq j \leq J_\ell, 1\leq k\leq C_\ell\\
\text{Combine Features:}&\quad
\mathbf{y}_{j,k}^{(\ell)}=\sum_{i=1}^{C_{\ell}}\tilde{\mathbf{x}}^{(\ell)}_{j,i}\theta^{(\ell,j)}_{i,k},\quad 1\leq j\leq J_\ell, 1\leq k \leq C'_\ell\\
\text{Combine Filters:}&\quad
\tilde{\mathbf{y}}^{(\ell)}_{j,k}= \sum_{i=1}^{J_\ell} \alpha^{(\ell,k)}_{j,i}\mathbf{y}_{i,k},\quad 1\leq j\leq J_\ell',1\leq k\leq C'_\ell\\
\text{Activation:}&\quad
\mathbf{z}_{j,k}^{(\ell)}=\sigma\circ \tilde{\mathbf{y}}_{j,k}^{(\ell)},\quad 1\leq j\leq J_\ell, 1\leq k \leq C'_\ell\\
\text{Reshaping:}&\quad
\mathbf{x}^{(\ell+1)}_{(j-1)C_{\ell}+k} = \mathbf{z}^{(\ell)}_{j,k}, \quad 1\leq j\leq J_\ell',1\leq k\leq C'_\ell. 
\end{align*}

We note that, in principle, the Filter step requires the eigendecomposition of the $\mathcal{L}_n$ which bears an $\mathcal{O}(n^3)$ computational cost (if all $n$ eigenvalues are utilized). However, this cost can be reduced by approximating $w_{j,k}(\lambda)$ by a polynomial which will allow one to implement the filters without directly computing an eigendecomposition.

\section{Convergence Analysis}\label{sec: convergence}

 In this section, we will provide a theoretical analysis of the numerical method proposed in Section \ref{sec: MFCN discrete implementation} for implementing an MFCN when one does not have global knowledge of the manifold but merely has access to finitely many data points, i.e., a point cloud. In particular, in Theorem \ref{thm: bound given filter bound short}, we will prove that 
$$
\lim_{n\rightarrow\infty}\|\mathbf{x}_k^{\ell}-P_nf_k^{\ell}\|_2=0,
$$
where $\mathbf{x}^{(\ell)}_k$ is the $k$-th signal in the $\ell$-th layer of the discrete implementation of the network and $P_nf_k^{(\ell)}$ is the projection of the $k$-th continuum signal onto the data points. 
In other words, with sufficiently many data points, the result of our discrete implementation will be nearly the same as if one implemented the entire network on the manifold $\mathcal{M}$, using global knowledge of both $\mathcal{M}$ and the function $F$,
and then discretized the corresponding output. 

In order to facilitate our analysis, in addition to  assuming either that each of the input signals $f$ or each of the spectral filters $w$ are $\kappa$-bandlimited, and we also assume that $w$ is Lipschitz continuous as defined below.

\begin{definition}[Lipschitz Constant]
 We let $A_{\text{Lip}}(w)$ denote the Lipschitz constant of a spectral filter $w$, i.e.,  the smallest constant such that   $$|w(a)-w(b)| \leq A_{\text{Lip}}(w)|a-b|$$ for all $a,b\in[0,\infty)$.
\end{definition}

We first consider the error associated to an individual spectral filter and aim to show that
 $\|w(\mathbf{L}_n)P_nf-P_nw(\mathcal{L})f\|_2$ will converge to zero as $n$ tends to infinity, where $P_n:\mathcal{C}(\mathcal{M})\rightarrow\mathbb{R}^n$ is the normalized evaluation operator defined as in \eqref{eqn: Pn}. Notably, in order to bound  $\|w(\mathbf{L}_n)P_nf-P_nw(\mathcal{L}_\rho)f\|_2$ we must account for three sources of discretization error:
 \begin{enumerate}
     \item The graph eigenvalue $\lambda_i^n$ does not exactly equal the manifold eigenvalue $\lambda_i$. Intuitively, one anticipates that this should yield an error on the order of $\alpha_{i,n}A_{\text{Lip}(w)}$, where $\alpha_{i,n}=|\lambda_i-\lambda_i^n|$.
     \item The graph eigenvector $\phi_i^n$ does not exactly equal $P_n\phi_i$, the discretization of the true continuum eigenfunction. One may expect this to yield errors of the order $\beta_{i,n}$, where $\beta_{i,n}=\|\phi_i^n-P_n\phi_i\|_2$.
     \item The discrete Fourier coefficient $\widehat{\mathbf{x}}(i)$ is not exactly equal to $\widehat{f}(i)$. Since Fourier coefficients are defined in terms of inner products, one expects this error to be controlled by a term $\gamma_n$ which describes how much discrete inner products $\langle P_n f,P_n g\rangle_2$ differ from continuum inner products $\langle f,g\rangle_{L^2(\mathcal{M})}$. 
 \end{enumerate}
Combining these sources of error, and letting $\alpha_n=\max_i\alpha_{i,n},\beta_n=\max_i\beta_{i,n}$, one anticipates that if either $f$ or $w(\mathcal{L}_\rho)$ is $\kappa$-bandlimited, then the total error will be $\mathcal{O}(\kappa(\alpha_nA_{\text{Lip}(w)}+\beta_n+\gamma_n))$.

In order to control the first two sources of error, we recall the following result: 

\begin{theorem}[Theorems 2.4, 2.5, 2.7, and 2.9 of \citet{Calder2019}]\label{thm: recall Calder results}
Let $G_n$ be constructed as either a $\epsilon$- or $k$-NN graph. In the $\epsilon$-graph case assume $\epsilon \sim \left ( \frac{\log(n)}{n}\right )^{\frac{1}{d+4}}$ and let $\mathcal{L}f=-\frac{1}{2\rho}\text{div}(\rho^2\nabla f)$. In the $k$-NN graph case assume that $k \sim \log(n)^{\frac{d}{d+4}} n^{\frac{4}{d+4}}$ and let $\mathcal{L}f=-\frac{1}{2\rho}\text{div}(\rho^{1-2/d}\nabla f)$.  Fix $\kappa>0$ and let $\alpha_{n}=\max_{1\leq i\leq \kappa}|\lambda_i-\lambda_i^n|$ and $\beta_n=\max_{1\leq i\leq \kappa}\|\phi_i^n-P_n\phi_i\|_2$.  Then, with $1-\mathcal{O}(n^{-9})$, 
\begin{equation}
\label{eqn: epsilon graph constants}
    \alpha_n = \mathcal{O}\left ( \frac{\log(n)}{n}\right )^{\frac{1}{d+4}}, \quad \text{and}\quad\beta_n = \mathcal{O}\left ( \frac{\log(n)}{n}\right )^{\frac{1}{d+4}},
\end{equation}
where the implied constants depend on the geometry of the manifold $\mathcal{M}$. 
\end{theorem}

Additionally, in order to estimate the third source of error, arising from the fact that $\widehat{\mathbf{x}}(i)\neq \widehat{f}(i)$, we will prove the following lemma which is a refined version of Lemma 5 of \citet{chew2022geometric}.  It builds upon the previous result in two ways: (i) it allows for the density $\rho$ to be non-constant, and (ii) its proof utilizes Bernstein's inequality rather than Hoeffding's inequality. This leads to a bound depending on the $L^4$ norms of $f$ and $g$ rather than the $L^\infty$ norms.
For a proof, please see Appendix \ref{sec: proof of bernstein}

\begin{lemma}\label{lem: bernstein}
For all $f,g\in\mathcal{C}(\mathcal{M})$, and for sufficiently large $n$, with probability at least $1-\frac{2}{n^9}$ we have
    \[|\langle P_n f, P_n g\rangle_2 - \langle f, g \rangle_{L^2(\mathcal{M})}| \leq 6 \sqrt{\frac{\log(n)}{n}}\|f\|_{L^4(\mathcal{M})}\|g\|_{L^4(\mathcal{M})}.\]
\end{lemma}

Together, Theorem \ref{thm: recall Calder results} and   Lemma \ref{lem: bernstein} enable us to obtain the following result, which establishes the convergence of graph spectral filters $w(\mathbf{\mathbf{L}}_n)$ to their continuum limit $w(\mathcal{L})$.
For a proof, please see Appendix \ref{sec: proof of theorem filter error}.
\begin{theorem}\label{thm: Filter Error short} Let $G_n$ be constructed as in Theorem \ref{thm: recall Calder results} and assume that the high-probability events considered in both Theorem \ref{thm: recall Calder results} and Lemma \ref{lem: bernstein} hold. Let $w:[0,\infty)\rightarrow \mathbb{R}$, $\|w\|_{L^\infty([0,\infty))}\leq 1$,
let $f\in \mathcal{C}(\mathcal{M})$,  and assume that either $f$ or $w(\mathcal{L})$ is $\kappa$-bandlimited for some $\kappa>0$. 
Then, for any continuous function $f$, we have 
\begin{align*}\label{eqn: filter stability with x}
&\|w(\mathbf{L}_n)P_nf-P_nw(\mathcal{L})f\|_2\\&\leq 
\mathcal{O}\left(\left( \frac{\log(n)}{n}\right)^{\frac{1}{d+4}}\right)(A_{\text{Lip}}(w)+1)\|f\|_{L^2(\mathcal{M})}+\mathcal{O}\left(\left(\frac{\log(n)}{n}\right)^{\frac{1}{4}}\right)\|f\|_{L^4(\mathcal{M})},
\end{align*}
where the constants implied  by the big-$\mathcal{O}$ notation depend on the geometry of $\mathcal{M}$. In particular, if $A_{\text{Lip}}(w)$ is finite, we have
$\lim_{n\rightarrow\infty}    \|w(\mathbf{L}_n)P_nf-P_nw(\mathcal{L})f\|_2=0.$
\end{theorem}

\begin{remark}\label{rem: relaxed assumptions long}
Inspecting the proof of Theorem \ref{thm: Filter Error short}, one may note that $A_{\text{Lip}}(w)$ may actually be replaced by the Lipschitz constant on the smallest interval containing all $\lambda_i$ and all $\lambda_i^n$, $1\leq i \leq \kappa$, where $\lambda_i\neq \lambda_i^n$. This means that, if $f$ is bandlimited, our result may be applied to any continuously differentiable function $w$. Moreover, if the graph $G_n$ is connected, we have $\lambda_1=\lambda_1^n=0$ and $\lambda_2,\lambda_2^n>0$. This implies that our theorem can be applied to any $w$ which is continuously differentiable on $(0,\infty)$ even if, for example, $\lim_{t\rightarrow 0^+}w'(t)=+\infty$ (which is the case for certain types of wavelets, such as those considered in \citet{perlmutter:geoScatCompactManifold2020}). Additionally, we note that with minor modifications, results similar to Theorem \ref{thm: Filter Error short} may be obtained for functions or filters that are approximately bandlimited in the sense that either $\sup_{k>\kappa}|w(\lambda_k)|$ or $\sum_{k>\kappa}|\widehat{f}(k)|^2$ are sufficiently small. In these cases, we will have 
$
\limsup_{n\rightarrow\infty} \|w(\mathbf{L}_n)\mathbf{x}-P_nw(\mathcal{L})f\|_2 \leq \sup_{k>\kappa}|w(\lambda_k)|\|f\|_{L^2(\mathcal{M})}$
 or $
\limsup_{n\rightarrow\infty} \|w(\mathbf{L}_n)\mathbf{x}-P_nw(\mathcal{L})f\|_2 \leq \|w\|_\infty\left(\sum_{k>\kappa}|\widehat{f}(k)|^2\right)^{1/2}$. In particular, results similar to  Theorem \ref{thm: Filter Error short} may be obtained for filters such as $w_t(\lambda)\coloneqq e^{-t\lambda}$, which correspond to the heat kernel.
\end{remark}

We next apply Theorem \ref{thm: Filter Error short} to obtain the following result, which guarantees that the numerical method introduced in Section \ref{sec: MFCN discrete implementation} for implementing MFCNs on point clouds will converge as the number of data points tends toward infinity in the sense that $\lim_{n\rightarrow\infty}\|\mathbf{x}_k^{(\ell)}-P_n f_k^{(\ell)}\|_2=0$. In other words, for large $n$, the output of the $k$-th channel in the $\ell$-th layer of the approximating GNN,  $\mathbf{x}_k^{(\ell)}$, will be nearly identical to the result one would have obtained if the entire network were implemented in the continuum and then subsampled at the very end.
Notably, we see that if one one normalizes the weights of the network so that $A_1^{(\ell)}=A_2^{(\ell)}=1$, then the rate of convergence will be independent of the number of filters per layer and is linear in the depth of the network. This is in contrast to results in \citet{chew2022geometric}, and \citet{wang2023geometric}, which exhibit polynomial dependence on the number of filters per layer and exponential dependence on the network depth. 
\begin{theorem}\label{thm: bound given filter bound short}
Let $f \in \mathcal{C}(\mathcal{M})$. Consider an MFCN where all of the filters or input signals, as well as the construction of the graph $G_n$, satisfy the assumptions of Theorem \ref{thm: Filter Error short}.  
Let $A_1^{(\ell)}=\max_{j,k}(|\sum_{i=1}^{C_{\ell}} |\theta_{i,k}^{(\ell,j)}|),$ $A_2^{(\ell)}=\max_{j,k}(\sum_{i=1}^{J_{\ell}} |\alpha_{j,i}^{(\ell,k)}|)$,  and assume that every activation function $\sigma$ is non-expansive, i.e. $|\sigma(x)-\sigma(y)|\leq |x-y|$.
Then, 
\begin{align*}&\|\mathbf{x}_k^{\ell}-P_nf_k^{\ell}\|_2\\\leq& \left(\sum_{i=0}^{\ell-1}\hspace{-.01in}\prod_{j=i}^{\ell-1}\hspace{-.01in} A_1^{(j)}A_2^{(j)}\hspace{-.025in}\right)\hspace{-.075in}\left(\hspace{-.035in}\mathcal{O}\hspace{-.025in}\left(\hspace{-.035in}\left(\hspace{-.025in} \frac{\log n}{n}\hspace{-.01in}\right)^{\frac{1}{d+4}}\right)(A_{\text{Lipmax}}(w)\hspace{-.03in}+\hspace{-.03in}1)\|f\|_{L^2(\mathcal{M})}\hspace{-.025in}+\hspace{-.025in}\mathcal{O}\hspace{-.025in}\left(\hspace{-.045in}\left(\frac{\log n}{n}\right)^{\frac{1}{4}}\right)\hspace{-.035in}\|f\|_{L^4(\mathcal{M})}\hspace{-.035in}\right),\end{align*}
where $A_{\text{Lipmax}}$ is the maximum Lipschitz constant amoungst all of the filters used in the network.
In particular, if $A_{\text{Lipmax}}(w)$ is finite, we have
$\lim_{n\rightarrow\infty}\|\mathbf{x}_k^{\ell}-P_nf_k^{\ell}\|_2=0.$
\end{theorem}

We provide a proof of Theorem \ref{thm: bound given filter bound short} in Appendix \ref{sec: proof of network theorem}. Additionally, as we discuss further in Appendix \ref{sec: proof of network theorem}, one can also consider generalized MFCNs where, in the filtering step, the spectral convolution operators $w_{j,k}^{(\ell)}(\mathcal{L})$ are replaced with generic linear operators on $L^2(\mathcal{M})$. In this work, we do not propose a numerical method for implementing other types of filtering operators. However, we do, in Appendix \ref{sec: proof of network theorem}, state a more general version of Theorem \ref{thm: bound given filter bound short} which explicitly relates the convergence rate of the entire MFCN to the convergence rate of the individual filters. Therefore, if one were to produce convergence guarantees for non-spectral filters, then our theory would immediately provide convergence results for the corresponding MFCNs. Similarly, convergence results for other classes of spectral filters or functions that are not necessarily bandlimited, such as the FDT filters considered in \citet{wang2022convolutional} and \citet{wang2023geometric}, will lead to immediate corollaries for MFCNs constructed with such filters.

\section{Numerical experiments}\label{sec: results}

Below in Section \ref{sec: convergence_results}, we will provide a numerical demonstration of Theorem \ref{thm: Filter Error short} showing the convergence of spectral filters on the sphere (where we have access to ``ground truth" due to the availability of closed form expressions for the Laplacian eigenfunctions). Then, in Section \ref{sec: classification and regression}, we conduct experiments using several MFCN models on real and synthetic data sets. Specifically, in Section \ref{sec: ellipsoids_results}, we consider regression tasks for functions defined on the surface of a two-dimensional ellipsoid (embedded in eight-dimensional space), testing the ability of various models to learn both individual eigenfunctions as well as random bandlimited functions. Then, in Section \ref{sec: melanoma_results}, we will showcase the utility of our method on real-world data by applying it to a high-dimensional data set derived from melanoma patients. For these later experiments, we will also propose a new method, Infogain, which we use to help design a wavelet-based MFCN model. We provide further details on Infogain in Section \ref{sec: info}. Code to reproduce our experiments can be found at \url{https://github.com/dj408/mfcn}.

\subsection{Convergence experiments}\label{sec: convergence_results}

We first empirically demonstrate the convergence of spectral filters on synthetic data, complementing our theoretical analysis in Theorem \ref{thm: Filter Error short}. In these experiments, we focus on the two-dimensional unit sphere embedded in $\mathbb{R}^3$, since the eigenfunctions and eigenvalues of the Laplacian on $\mathbb{S}^2$ are known in closed form (i.e., the eigenfunctions are spherical harmonics).

We consider a simple function $f$ chosen to be the sum of two Laplacian eigenfunctions, $f=Y^0_1+Y^0_2$, where $Y^i_j$ denotes the $i$-th spherical harmonic of degree $j$. We sample $n$ points uniformly from the sphere, evaluate $f$ at these points, and construct a $k$-NN graph $G_n$ as described in Definition \ref{ex: knn}, with $k$ chosen in accordance with Theorem \ref{thm: recall Calder results}. We then apply a spectral filter\footnote{When computing the eigendecomposition, we use only the first $64$ eigenpairs, as $w(\lambda)=e^{-\lambda}$ takes negligible values at the higher eigenvalues.} $w(\lambda)=e^{-\lambda}$ so that $w(\mathcal{L})$ is the heat kernel $e^{-\mathcal{L}}$. 

In Figure \ref{fig:knn_spectral_filter_converg_plot}, we report the discretization errors $\|w(\mathbf{L}_n) P_n f - P_n w(\mathcal{L}) f\|_2$ over 10 trials for $n$ ranging from $2^6$ to $2^{14}$. We also report the convergence of the first two distinct, non-zero eigenvalues (corresponding to the first eight eigenvalues counting their multiplicity) 
in Figure \ref{fig:knn_eigenvals_converg_plot}. In the case of uniform sampling on the two-dimensional unit sphere, the manifold Laplacian $\mathcal{L}f=-\frac{1}{2\rho}\text{div}(\rho^{1-2/d}\nabla f)$ reduces to  $-2\pi\Delta$, where $-\Delta=-\text{div}\circ\nabla$ is the negative Laplace-Beltrami operator, and we see that the numerical eigenvalues converge to the true values of $2\pi \ell(\ell+1)$, for $\ell=1,2$. Additionally, in Appendix \ref{appendix: convergence}, we conduct similar experiments in the setting where $G_n$ is constructed as an $\epsilon$-graph (per Definition \ref{ex: eps}).

\begin{figure*}[hbt!]
    \begin{center}
    \includegraphics[width=1.0\textwidth]{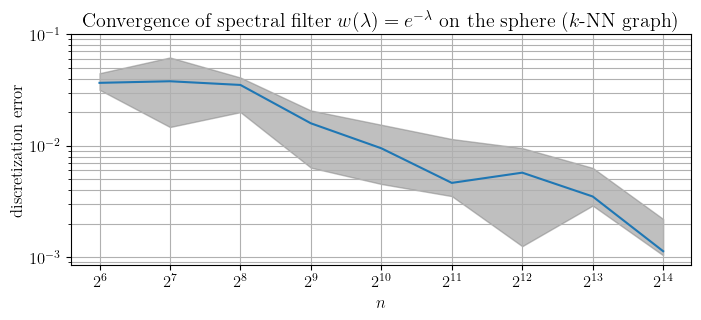}
    \caption{Discretization error for spectral filter $w(\lambda)=e^{-\lambda}$ applied to the sum of two spherical harmonics, for a $k$-NN graph construction. The median error of 10 runs is shown in blue, against a gray band of the 25th- to 75th-percentile error range.}
    \label{fig:knn_spectral_filter_converg_plot}
    \end{center}
\end{figure*}

\begin{figure*}[hbt!]
    \begin{center}
    \includegraphics[width=1.0\textwidth]{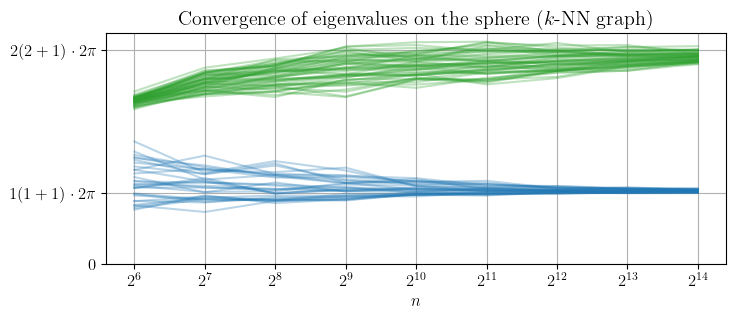}
    \caption{Convergence of first eight, non-zero eigenvalues on the sphere, for a $k$-NN graph construction, all 10 runs combined. The blue lines are the first three eigenvalues that converge to the same limit (since the first non-zero eigenvalue of the spherical Laplacian has multiplicity three). Similarly, the green lines are the next five eigenvalues.}
    \label{fig:knn_eigenvals_converg_plot}
    \end{center}
\end{figure*}
\subsection{Classification and regression experiments}\label{sec: classification and regression}

Below in Sections \ref{sec: ellipsoids_results} and \ref{sec: melanoma_results}, we will assess the effectiveness of several MFCN models for regression and classification tasks. A high-level of illustration of MFCNs, as applied to these tasks, is given in Figure \ref{fig:architecture_cartoon}.

We first consider MFCNs inspired by the extremely popular GCN introduced in \citet{kipf:classGCNN2017} (discussed earlier in Section \ref{ex: mcn}). To do this, in each layer, we use a single low-pass  filter in our Filter step and a trainable matrix in our Combine-Features step. (Since there is only one filter, we omit the Combine-Filters step). For our low-pass filter, we will fix $t>0$ and use the heat kernel $\mathcal{P}_{t}=e^{-t\mathcal{L}}$, i.e., $\mathcal{P}_t=a_t(\mathcal{L})$, where $a_t(\lambda)=e^{-t\lambda}$. (In our experiments, we will consider $t=1/2$ and $t=1$.)  Additionally, for increased computational efficiency, we will also consider a variation that approximates the heat kernel by the lazy random walk diffusion operator $\mathbf{P}_n= \frac{1}{2}(\mathbf{I}_n + \mathbf{A}_n\mathbf{D}_n^{-1})$. 
Since these networks utilize low-pass filters, we will refer to them as `MFCN-low-pass-spectral-0.5', `MFCN-low-pass-spectral-1.0' and `MFCN-low-pass-approx', respectively (where $0.5$ and $1.0$ indicate the value of $t$). 

We next consider wavelet-based MFCN models inspired by the scattering transforms discussed in Sections \ref{ex: scat} and \ref{ex: learnable scattering}. 
We will utilize  a wavelet filter bank, based on Diffusion Wavelets \citep{coifman:diffWavelets2006}, of the form $\{\{w_j(\mathcal{L})\}_{j=0}^N\cup \{a_J(\mathcal{L}\}\}$. As with the low-pass models, we fix $t>0$ and consider the heat kernel $\mathcal{P}_{t}=e^{-t\mathcal{L}}$. We then define $w_0(\mathcal{L})=\mathcal{I} - \mathcal{P}_t$, $w_j(\mathcal{L})=\mathcal{P}_t^{2^{j-1}} - \mathcal{P}_t^{2^{j}}
$ for $1\leq j \leq J$, and $a_J(\mathcal{L}\}=\mathcal{P}_t^{2^{J}}$. (We note that we may write these wavelets as spectral filters $w_j(\mathcal{L})$ and $a_J(\mathcal{L})$ by setting 
 $w_{0}(\lambda) = 1 - e^{-t\lambda}$, $w_{j}(\lambda) = e^{-2^{j-1}t\lambda} - e^{-2^{j}t\lambda}$ for $1 \leq j \leq J$,   and  by $a_{J}(\lambda) = e^{-2^{Jt\lambda}}$.)  
Similar to the low-pass case, we will also utilize wavelets which approximate the heat kernel by the lazy random walk diffusion operator $\mathbf{P}_n$. 
This yields wavelets of the form
$w_j(\mathbf{P}_n) = \mathbf{P}_n^{2^{j-1}} - \mathbf{P}_n^{2^{j}}$, $1\leq j\leq J$, together with $w_0(\mathbf{P}_n)=\mathbf{I}_n-\mathbf{P}_n$ and $a_J(\mathbf{P}_n)=\mathbf{P}_n^{2^J}.$ 

We will refer to our MFCNs defined using wavelets as `MFCN-wavelet-spectral-0.5', `MFCN-wavelet-spectral-1.0' and `MFCN-wavelet-approx', respectively. In order to increase the expressive power of these networks, we will utilize learnable matrices in both the Combine-Features and Combine-Filters step except where otherwise noted. Also, we note that, similar to \citet{Xu2024}, but unlike most traditional scattering networks, we include the low-pass filter $a_j(\mathcal{L})$ (or $a_j(\mathbf{P}_n)$) 
 in our filter bank in each layer, and do not utilize skip connections. 

Additionally, we observe that on the melanoma dataset considered in Section \ref{sec: melanoma_results} below, the predesigned choice of dyadic scales $2^j$ may be overly rigid and hinder performance. Therefore, we also consider generalized diffusion wavelets, inspired by \citet{tong2022learnable}, of the form $w_j(\mathbf{P}_n) = \mathbf{P}_n^{t_{j}} - \mathbf{P}_n^{t_{j+1}}$, $a_J(\mathbf{P}_n)=\mathbf{P}_n^{t_J}$, where $t_0<t_1<t_2\ldots <t_J$ is an increasing sequence of diffusion scales.
However, rather than selecting the scales using the method from \citet{tong2022learnable}, we will introduce Infogain, a method for choosing these scales based on the KL divergence 
(details in Section \ref{sec: info}). We will refer to MFCNs using these wavelets as `MFCN-wavelet-Infogain.' Notably, unlike the method in \citet{tong2022learnable}, which learns the scales from labeled training data, our method of scale selection is unsupervised which allows it to be effective in low-data environments such as the melanoma data set. Furthermore, Infogain will, in general, select different scales for each input channel, unlike the previous method.  Additionally, we note that $\mathbf{P}_n$ is a sparse matrix (as long as the graph $G_n$ is sparse). This allows us to compute $\mathbf{P}_n^{t} \mathbf{x}$ efficiently via recursive sparse matrix multiplication on a vector $\mathbf{x}$ and increases the computational efficiency of diffusion wavelets, especially on large data sets. 

We compare our MFCN models against several popular GNNs (applied to the graph $G_n$ which we construct from the data points): Graph Convolutional Network (GCN) \citep{kipf2016semi}, Graph Attention Network (GAT) \citep{velivckovic2017graph}, Graph Isomorphism Network (GIN) \citep{xu2018how}, and GraphSAGE \citep{hamilton2017inductive}. For all models, we construct $G_n$ as a $k$-NN graph as described in Definition \ref{ex: knn} based on the observation that $k$-NN graphs are the choice most often used in practice (as has previously been noted in \citet{Calder2019}). Spectral filters are implemented using the first twenty non-trivial eigenvectors of $\mathbf{L}_n$, $\phi_2^n,\ldots,\phi^{21}_n$. Further experimental details, such as hyperparameter settings, can be found in Appendix \ref{appendix:experimental_details}. 

\begin{figure*}[hbt!]
    \begin{center}
    \includegraphics[width=1.0\textwidth]{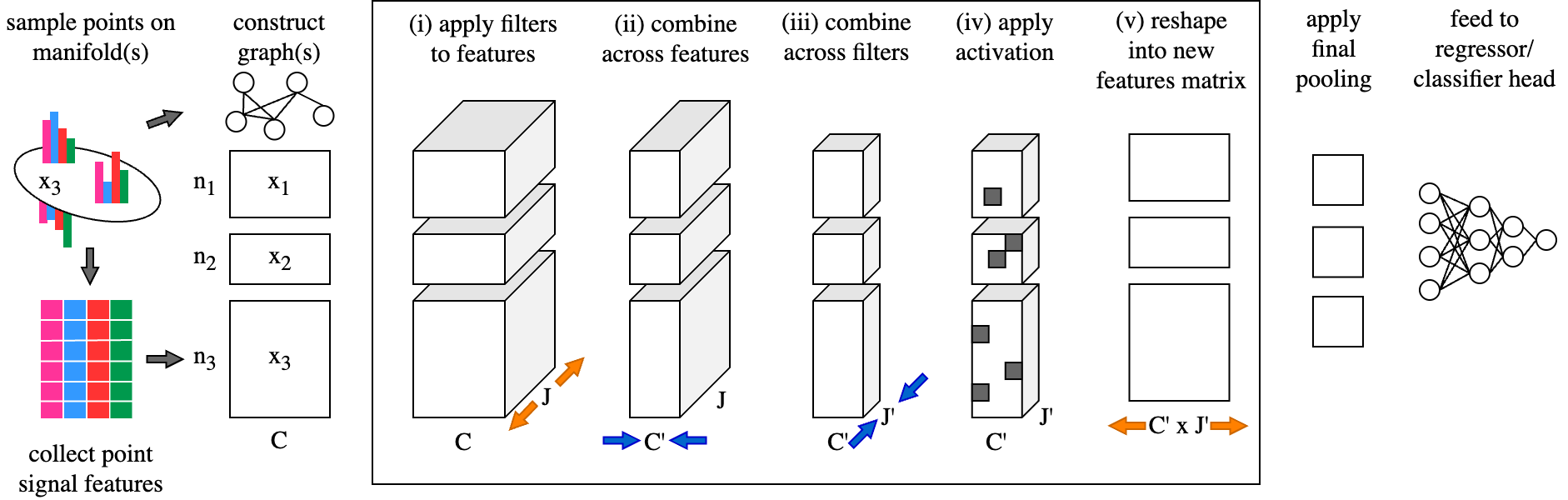}
    \caption{High-level illustration of MFCNs as applied to point-cloud data. An MFCN may have multiple cycles of steps (i) to (v) (boxed). Note that step (i) is typically combinatorially expansive (i.e. transforms the raw feature matrix $\mathbf{X} \in \mathbb{R}^{N \times C} \rightarrow \mathbb{R}^{N \times C \times J}$, where $N$ is the total number of nodes across all batched graphs, $C$ is the number of channels/features, and $J$ is the number of filters). However, steps (ii) and (iii) may be expansive or reductional (as drawn), depending on the whether the number of input features is less than or greater than the number of output features specified as hyperparameters. After filter-combine cycles, final pooling layers can be applied in node-wise and/or feature-wise fashion (e.g. as linear, scattering moments, max, mean, or `top-k' pooling layers, etc.). Finally, the resulting features may be fed into a regressor or classifier head, depending on the learning task.}
    \label{fig:architecture_cartoon}
    \end{center}
\end{figure*}

\subsubsection{Ellipsoid point regression task}\label{sec: ellipsoids_results}

We test the ability of MFCNs and other models to learn functions defined on the surface of two-dimensional ellipsoids, embedded in eight-dimensional space under both noiseless and noisy sampling conditions. We uniformly sample $1024$ points lying on an ellipsoid lying in three-dimensional space defined by $x^2/a^2 + y^2/b^2 + z^2/c^2 = 1$ for Euclidean coordinates $x$, $y$, and $z$ with $a = 3, b = 2, c = 1$. We then embed this ellipsoid in eight-dimensional space by first applying the map $(x,y,z)\mapsto (x,y,z,0,0,0,0,0)$ to the sampled points and then applying a random eight-dimensional rotation.
 Next, we construct an unweighted symmetric $k$-NN graph from the embedded points, using the method described in Definition \ref{ex: knn} and calculate its graph Laplacian $\mathbf{L}_n$, as well as the corresponding eigendecomposition $\mathbf{L}_n\phi^n_i=\lambda_i^n\phi_i^n$.

As a target function, we construct a random bandlimited function from the first 20 nontrivial eigenvectors, $f = \sum_{i=2}^{21} a_i \phi_{i}^{n}$, where the $a_i$ are  sampled randomly from a uniform distribution on $(-1, 1)$. (We don't use the first eigenvectore because it is constant.) Figure \ref{fig:ellip_rand_bandlimit_signal} shows an example of such a signal on an ellipsoid (in three-dimensional  ambient space before we map it to $\mathbb{R}^8$). Additionally, we provide further regression experiments where the target function is an individual eigenvector in Appendix \ref{appendix: ellipsoid}.

Our goal is to regress (estimate) the signal, i.e., the target function, at points not in the training data. We assess this via five-fold cross-validation. That is, we  train the model on $80\%$ of the training points and use the other $20\%$ as a validation set. We then repeat this five times, so that each point gets an opportunity to belong to the validation set, and average performance metrics over all five validation scores. We then repeat this experiment ten times, each time generating a new ellipsoid and new bandlimited signal and then running five-fold cross-validation.

\begin{figure*}[hbt!]
    \begin{center}
    \includegraphics[width=1.0\textwidth]{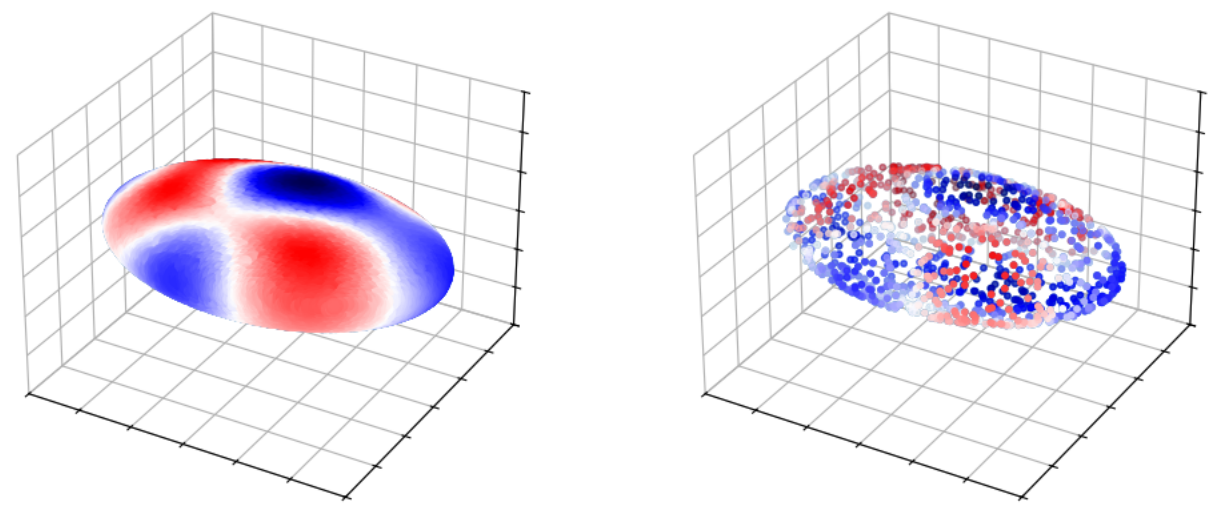}
    \caption{Illustration of a random bandlimited function constructed from the first 20 nontrivial eigenvectors of an ellipsoid (as estimated from the Laplacian of a $k$-NN graph) (left), and evaluated pointwise in a uniform sample of 1024 points (right).}
    \label{fig:ellip_rand_bandlimit_signal}
    \end{center}
\end{figure*}

Next, we repeat the above experiment, but with the addition of multivariate Gaussian noise on the sample points. This simulates the setting where the manifold hypothesis is only \emph{approximately true} in the sense that the data points merely lie near the manifold $\mathcal{M}$ rather than exactly on it, which is the case in most real-world applications. To do so, we use data points $\widetilde{x}_i = x_i + \mathbf{\epsilon}_i$, where $x_i\in\mathcal{M}$ and $\mathbf{\epsilon}_i$ is an i.i.d. normal random variable, $\mathbf{\epsilon}_{i} \sim \mathcal{N}(\mathbf{0}, \sigma_{\epsilon} \mathbf{I}_{8 \times 8})$. We calibrate the noise level $\sigma_{\epsilon}$ to induce sample points to lose an average of 10\% of their true $k$ nearest neighbor assignments. With $n=1024$, we find that this yields $\sigma_{\epsilon} \approx \frac{1}{40 \sqrt{2}}$.

Results of these experiments are presented in Table \ref{table:ellipsoids_combination_signal_results}.
We see that all of the wavelet-based MFCNs perform well, achieving an $R^2$ of at least $0.88$ in the noiseless case and at least $0.69$ in the noisy case. The low-pass-based MFCNs are a bit more inconsistent. In particular, while MFCN-low-pass-spectral-0.5 is the top performing method, MFCN-low-pass-1.0 fails utterly, achieving an $R^2$ of $0.00$ in both the noisy and the noiseless settings. This is likely because if the diffusion-time parameter $t$, used in the heat-kernel $\mathcal{P}_t=e^{-t\mathcal{L}}$, is chosen to be too large, the low-pass filtering will result in nearly constant features, paralleling the well-known oversmoothing problem in GNNs \citep{rusch2023survey}. Of the baselines, GraphSage is the top performing model, achieving $R^2$ scores nearly as high as the wavelet-based MFCNs. Interestingly, GIN consistently fails to learn, with a mean $R^2$ score of $0.00$ in both settings. This may be because GIN is primarily designed for tasks such as graph classification. 

\begin{table}[hbt!]
    \caption{Performance metrics on the ellipsoids random signal node regression task, noiseless sampling. Mean of means $\pm$ mean standard deviation (i.e. square root of mean variance) from five-fold cross validations of 10 unique datasets are shown. MSE: mean squared error. Note that GIN and MFCN-low-pass-spectral-1.0 both appear to have failed to learn under the experimental regime. $R^2$ scores $<0$ (occurring for numerical reasons) have been corrected back to $0$. Best $R^2$ and MSE results are \textbf{bolded}.}
    \label{table:ellipsoids_combination_signal_results}
    \begin{tabular}{@{}lcccrr@{}}
    \toprule
    \textbf{Sampling} $\vert$ Model & $R^2$  & MSE & Sec. per epoch & Num. epochs & Min. per fold \\
    \midrule
    \textbf{Noiseless sampling}\\ 
    \hline
    MFCN-low-pass-spectral-0.5 & $ \mathbf{0.98 \pm 0.01}$ & $ \mathbf{0.0026 \pm 0.0009} $ & $ 0.0044 \pm 0.0113 $ & $ 1786 \pm 338 $ & $ 0.13 \pm 0.02 $ \\
    MFCN-wavelet-spectral-0.5 & $ 0.94 \pm 0.04 $ & $ 0.0065 \pm 0.0040 $ & $ 0.0053 \pm 0.0151 $ & $ 1210 \pm 379 $ & $ 0.11 \pm 0.03 $ \\
    MFCN-wavelet-spectral-1.0 & $ 0.90 \pm 0.06 $ & $ 0.0100 \pm 0.0059 $ & $ 2.0594 \pm 0.0532 $ & $ 1038 \pm 289 $ & $ 35.61 \pm 9.91 $ \\
    MFCN-wavelet-approx-dyadic & $ 0.88 \pm 0.02 $ & $ 0.0059 \pm 0.0013 $ & $ 0.0319 \pm 0.0040 $ & $ 325 \pm 94 $ & $ 0.17 \pm 0.05 $ \\
    GraphSAGE & $ 0.85 \pm 0.02 $ & $ 0.0077 \pm 0.0016 $ & $ 0.0032 \pm 0.0009 $ & $ 712 \pm 386 $ & $ 0.04 \pm 0.02 $ \\
    GAT & $ 0.76 \pm 0.14 $ & $ 0.0116 \pm 0.0058 $ & $ 0.0048 \pm 0.0013 $ & $ 1829 \pm 805 $ & $ 0.15 \pm 0.06 $ \\
    MFCN-low-pass-approx & $ 0.72 \pm 0.12 $ & $ 0.0279 \pm 0.0108 $ & $ 0.0080 \pm 0.0043 $ & $ 3531 \pm 1895 $ & $ 0.47 \pm 0.25 $ \\
    GCN & $ 0.68 \pm 0.15 $ & $ 0.0154 \pm 0.0061 $ & $ 0.0040 \pm 0.0013 $ & $ 3246 \pm 1458 $ & $ 0.22 \pm 0.10 $ \\
    GIN & $ 0.00 \pm 0.00 $ & $ 0.0512 \pm 0.0076 $ & $ 0.0035 \pm 0.0015 $ & $ 282 \pm 199 $ & $ 0.02 \pm 0.01 $ \\
    MFCN-low-pass-spectral-1.0 &   $0.00 \pm 0.00$ &   $0.1125 \pm 0.0123$ &   $1.9483 \pm 0.0539$ &   $11 \pm \;3$ & $0.35 \pm 0.11$\\
    \hline
    \textbf{Noisy sampling}\\
    \hline
    MFCN-low-pass-spectral-0.5 & $\mathbf{ 0.76 \pm 0.04}$ & $ \mathbf{0.0256 \pm 0.0040} $ & $ 0.0050 \pm 0.0194 $ & $ 653 \pm 272 $ & $ 0.05 \pm 0.02 $ \\
    MFCN-wavelet-spectral-0.5 & $ 0.74 \pm 0.05 $ & $ 0.0272 \pm 0.0039 $ & $ 0.0057 \pm 0.0232 $ & $ 540 \pm 253 $ & $ 0.05 \pm 0.02 $ \\
    MFCN-wavelet-spectral-1.0 & $ 0.73 \pm 0.06 $ & $ 0.0282 \pm 0.0043 $ & $ 2.0702 \pm 0.0867 $ & $ 523 \pm 258 $ & $ 18.06 \pm 8.95 $ \\
    MFCN-wavelet-approx-dyadic & $ 0.69 \pm 0.06 $ & $ 0.0337 \pm 0.0066 $ & $ 0.0397 \pm 0.0019 $ & $ 240 \pm 94 $ & $ 0.16 \pm 0.06 $ \\
    GraphSAGE & $ 0.65 \pm 0.07 $ & $ 0.0375 \pm 0.0069 $ & $ 0.0039 \pm 0.0005 $ & $ 704 \pm 276 $ & $ 0.05 \pm 0.02 $ \\
    MFCN-low-pass-approx & $ 0.63 \pm 0.06 $ & $ 0.0392 \pm 0.0067 $ & $ 0.0084 \pm 0.0016 $ & $ 1192 \pm 614 $ & $ 0.17 \pm 0.09 $ \\
    GAT & $ 0.63 \pm 0.08 $ & $ 0.0397 \pm 0.0078 $ & $ 0.0077 \pm 0.0001 $ & $ 1427 \pm 789 $ & $ 0.18 \pm 0.10 $ \\
    GCN & $ 0.53 \pm 0.11 $ & $ 0.0506 \pm 0.0121 $ & $ 0.0060 \pm 0.0005 $ & $ 3206 \pm 1914 $ & $ 0.32 \pm 0.19 $ \\
    GIN & $ 0.00 \pm 0.00 $ & $ 0.1126 \pm 0.0142 $ & $ 0.0040 \pm 0.0025 $ & $ 265 \pm 191 $ & $ 0.02 \pm 0.01 $ \\
    MFCN-low-pass-spectral-1.0 &   $0.00 \pm 0.00$ &   $0.1124 \pm 0.0142$ &   $1.9374 \pm 0.0000$ &   $\;8 \pm \;2$ & $0.26 \pm 0.08$\\
    \botrule
    \end{tabular}
\end{table}

\begin{table}[hbt!]
    \caption{Performance metrics on the melanoma manifold classification task. Mean of means $\pm$ mean standard deviations from five repetitions of 10-fold cross validations are shown. F1 scores are based on the (minority) positive class representing patients who did respond to treatment. Results are sorted by descending mean accuracy (rounding may show false ties); best accuracy and F1 scores are \textbf{bolded}.}
    \label{table:melanoma_results}
    \begin{tabular}{@{}lccrrr@{}}
    \toprule
    Model & Accuracy  & F1 score & Sec. per epoch & Num. epochs & Min. per fold\\
    \midrule
    MFCN-wavelet-approx-Infogain & $\mathbf{ 0.81 \pm 0.14}$ & $\mathbf{ 0.71 \pm 0.30}$ & $ 1.0089 \pm 0.0128 $ & $ 49 \pm 31 $ & $ 0.83 \pm 0.51 $ \\
    GIN & $ 0.78 \pm 0.17 $ & $ 0.69 \pm 0.28 $ & $ 0.0666 \pm 0.0144 $ & $ 53 \pm 41 $ & $ 0.06 \pm 0.05 $ \\
    MFCN-wavelet-spectral-1.0 & $ 0.76 \pm 0.14 $ & $ 0.65 \pm 0.32 $ & $ 56.1264 \pm 2.1844 $ & $ 38 \pm 28 $ & $ 35.43 \pm 27.36 $ \\
    MFCN-wavelet-approx-dyadic & $ 0.76 \pm 0.17 $ & $ 0.61 \pm 0.37 $ & $ 0.7534 \pm 0.0126 $ & $ 62 \pm 39 $ & $ 0.78 \pm 0.49 $ \\
    GAT & $ 0.74 \pm 0.15 $ & $ 0.53 \pm 0.37 $ & $ 0.1017 \pm 0.0105 $ & $ 51 \pm 36 $ & $ 0.09 \pm 0.06 $ \\
    MFCN-low-pass-approx & $ 0.71 \pm 0.15 $ & $ 0.51 \pm 0.37 $ & $ 0.0920 \pm 0.0160 $ & $ 49 \pm 36 $ & $ 0.08 \pm 0.06 $ \\
    MFCN-wavelet-spectral-0.5 & $ 0.70 \pm 0.18 $ & $ 0.52 \pm 0.38 $ & $ 51.3448 \pm 0.5228 $ & $ 26 \pm 23 $ & $ 22.12 \pm 20.44 $ \\
    GCN & $ 0.70 \pm 0.15 $ & $ 0.47 \pm 0.37 $ & $ 0.0927 \pm 0.0136 $ & $ 39 \pm 29 $ & $ 0.06 \pm 0.05 $ \\
    GraphSAGE & $ 0.68 \pm 0.13 $ & $ 0.32 \pm 0.33 $ & $ 0.0635 \pm 0.0107 $ & $ 29 \pm 29 $ & $ 0.03 \pm 0.03 $ \\
    MFCN-low-pass-spectral-0.5 & $ 0.63 \pm 0.10 $ & $ 0.24 \pm 0.34 $ & $ 50.1821 \pm 0.6820 $ & $ 6 \pm 7 $ & $ 4.69 \pm 5.70 $ \\
    MFCN-low-pass-spectral-1.0 & $ 0.57 \pm 0.12 $ & $ 0.00 \pm 0.00 $ & $ 50.3428 \pm 1.5880 $ & $ 3 \pm 2 $ & $ 2.27 \pm 1.39 $ \\
    \botrule
    \end{tabular}
\end{table}

\subsubsection{Melanoma patient manifold classification task}\label{sec: melanoma_results}

We apply various MFCN models to a high-dimensional biomedical dataset. Following the lead of \citet{chew2022geometric}, we consider a subset of data originally collected by \citet{ptacek202152}, consisting of 29 protein expression levels measured from hundreds of samples of T-lymphocyte cells from each of $54$ patients with melanoma who received checkpoint blockade immunotherapy. Cell sample sizes range from 489 to 1784 per patient.  

The learning task is to classify whether the origin patient showed a response to immunotherapy, formulated as a binary classification problem (where the two classes consists of patients who showed a partial or complete response, versus patients who experienced stable disease or remission). Mathematically, we model the cells from each patient as an eight-dimensional manifold lying in 29-dimensional space and treat this task as a manifold classification problem where the number of sample points per manifold ranges from 489 to 1784. For simplicity, we use the coordinates of each point (in $\mathbb{R}^{29}$) as the input signals.

Due to the small size of the dataset, we perform five repetitions of $10$-fold cross-validation, dividing the $54$ manifolds into $10$ folds using a stratified-sampling strategy. That is, to better ensure representative training and validation sets, we stratify each fold sample along two strata so that each fold contains roughly equal proportions of (1) responder and non-responder patient samples (i.e., balanced classes), and (2) Roman numeral cancer stages at diagnosis (i.e., I, II, III, or IV).  Additionally, due to the small size of the dataset, we omit the Combine-Features step in order to reduce the number of trainable parameters. We summarize model performance metrics across repetitions by calculating each model's mean of means (i.e., each mean within this `grand' mean is from a complete 10-fold cross-validation procedure in which every sample has been part of a validation set exactly once). We compute standard deviations by taking the variance of the means within each run of cross validation;  then we average this variance over each of the five repetitions and take the square root.

On this data set, when constructing the wavelets used in the MFCN-wavelet-approx model, in addition to the more standard dyadic wavelets $\mathbf{P}_n^{2^{j-1}}-\mathbf{P}_n^{2^{j}}$, we also consider wavelets of the form $\mathbf{P}_n^{t_{j-1}}-\mathbf{P}_n^{t_j}$, where $t_0<t_1<\ldots<t_J$ is a general increasing sequence of diffusion scales. We note that  generalized wavelets such as these were previously considered in \citet{tong2022learnable}, where the diffusion scales $t_j$ were chosen via a differentiable selection matrix. However, here, we introduce a new method, which we name \emph{Infogain}, that selects the diffusion scales $t_j$ on a channel-by-channel basis using ideas from information theory aiming to ensure that each wavelet feature contains (approximately) equal amounts of information. 

This method presents a data-derived strategy for selecting the diffusion scales without the need for large datasets or labeled training data.
This is in contrast to the method used introduced in 
\citet{tong2022learnable} (and discussed in Section \ref{ex: learnable scattering}) which learned the scales through the back-propagation of a differentiable selection matrix (and therefore requires labeled training data). Infogain may thus be viewed as a lightweight alternative to that method, which can also be applied to small or unlabeled data sets. Additionally, it differs from both \citet{tong2022learnable}, as well as more traditional dyadic wavelets, in that it is able select different scales for each channel. For simplicity and computational efficiency, we only utilize the Infogain method in the first MFCN layer (i.e., for initial hidden feature extraction), and then subsequently use dyadic scales in the later layers. Further details on Infogain are provided in Section \ref{sec: info}.

The results in Table \ref{table:melanoma_results} suggest that utilizing Infogain
improves the effectiveness of the corresponding MFCN for the difficult-to-classify, high-dimensional manifolds in the melanoma patient dataset, increasing the mean classification accuracy by five percentage points (relative to MFCN-wavelet-approx-dyadic). More generally, we see that most of the  wavelet-based MFCN models perform well, whereas most of the low-pass variations and most of the message passing networks perform less well (although interestingly we see that MFCN-wavelet-spectral-0.5 and MFCN-low-pass-approx achieve similar results). This suggests that the wavelet-based MFCNs may be better suited to some manifold classification tasks, perhaps due to their ability to capture multi-scale geometric structure. Notably, GIN is the best performing baseline model, and the second best model overall, whereas GraphSage is the worst performing baseline. This is a direct reversal of the regression tasks considered in Section \ref{sec: ellipsoids_results} where GIN failed to learn and GraphSAGE was the best performing baseline.

\begin{figure*}[hbt!]
    \begin{center}
    \includegraphics[width=0.8\textwidth, trim={0 0 0 1.4cm},clip]{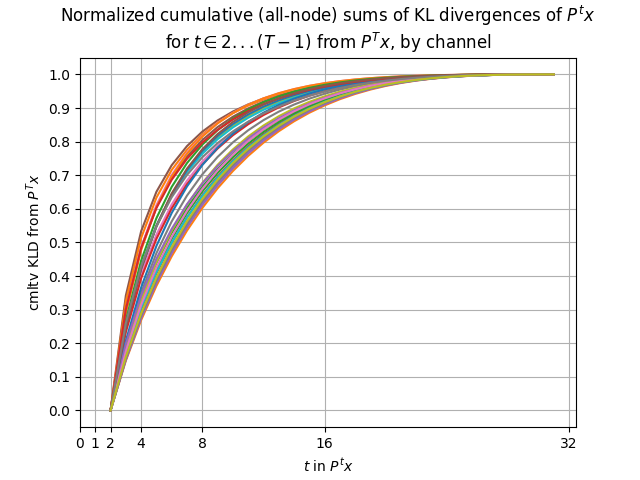}
    \caption{Plot of relative information gain by diffusion time step $t$ and channel, for the 29 channels/features (protein expression levels) in the melanoma patient manifold dataset. The different rates of relative information gain across diffusion steps are reflected in the varying steepness of the channel arcs. For reference, the $t$ axis is ticked in the dyadic scale.}
    \label{fig:melanoma_P_Infogain_cmltv_klds_plot}
    \end{center}
\end{figure*}

\subsubsection{Infogain}\label{sec: info}

In this section, we provide more details on the Infogain procedure, which we use to select diffusion scales for the wavelets on the melanoma data set in the MFCN-wavelet-approx model. As noted in Section \ref{sec: melanoma_results}, the main idea behind Infogain is to select different scales for each channel so that equal amounts of information are contained in each wavelet feature. This allows one to construct a set of wavelets which utilizes the most `important' scales without the need for a priori knowledge.

For simplicity, for each channel (feature) $c$, we set the first several scales by $t^c_0=0, t^c_1=1$, and $t^c_2=2$ so that first two wavelets are always given by $w_{0}(\mathbf{P}_n) = \mathbf{I} - \mathbf{P}_n$ and $w_{1}(\mathbf{P}_n) = \mathbf{P}_n- \mathbf{P}_n^{2}$ (the same as in the dyadic case). We then select the remaining scales $t^c_3,\ldots,t^c_J$ based on the objective of maintaining roughly equal intervals of relative information gain, also known as relative entropy or Kullback-Leibler (KL) divergence, within each filter and channel. More specifically, we fix $J\in\mathbb{N}$ and a maximal diffusion scale $t_J$. Then, for each graph, we consider the $n\times C$ feature matrix $\mathbf{X}$, where $C$ is the number of channels, and:
\begin{enumerate}
    \item Compute $\mathbf{P}_n^{t}\mathbf{x}_c$ for each $1\leq c\leq C$ and each $t = 2,\ldots,t_J$ (recursively via sparse matrix-vector multiplication).
    \item Normalize each $\mathbf{P}_n^{t} \mathbf{x}_{c}$ 
    into probability vectors $\mathbf{q}^{t}_{c}$ for $t = 2,\ldots,t_J$ by first applying min-max scaling to map the entries of each vector into $[0,1]$ and then applying $\ell^1$-normalization.
    \item Replace the zeros in each $\mathbf{q}^{t}_{c}$, with a small nonzero value to avoid arbitrarily small values from skewing information calculations. (Here, we use $\frac{1}{2}\mathrm{min}\{\mathbf{q}^{t}_{c}: \mathbf{q}^{t}_{c} > 0\}$.) 
    \item Compute the KL divergences ($D_{KL}$) of between $\mathbf{q}_c^t$ and $\mathbf{q}_c^{t_J}$ for $t=2,\ldots,t_{J-1}$, i.e.,
    $$(D_{KL})_{t,c}=D_{KL}\big(\mathbf{q}^{t}_{c} \; \| \; \mathbf{q}^{t_J}_{c}\big) = \sum_{k=1}^{n} \mathbf{q}^{t}_{c}(k) \; \mathrm{log}\bigg(\frac{\mathbf{q}^{t}_{c}(k)}{\mathbf{q}^{t_J}_{c}(k)} \bigg).$$
\end{enumerate}

Next, for each channel $c$, $1\leq c\leq C$, we then do the following:
\begin{enumerate}
    \item For each time $t$, sum the $(D_{KL})_{t,c}$ over all graphs: i.e., $(D_{KL})_{t,c}^{\text{total}}=\sum_{i=1} (D_{KL})_{t,c}(i)$, where $(D_{KL})_{t,c}(i)$ is the KL divergence at time $t$ and channel $c$ on the $i$-th graph. (Here, one can optionally re-weight these sums for unbalanced classes, if, for example, the downstream task is manifold classification.)
    \item For each $t=2,\ldots,t_{J-1}$ compute the cumulative sums $S_{t,c}=\sum_{s=2}^t(D_{KL})^{\text{total}}_{t,c}$.
    \item Apply min-max rescaling to the each of the $\{S_{t,c}\}_{t=2}^{t_{J-1}}$ in order for it to have minimal value $0$ and maximal value $1$. 
    \item Finally, select the diffusion scales $t^c_j$ (i.e., the powers to which $\mathbf{P}_n$ is raised)  so that the $S_{t^c_j,c}$ are as evenly spaced as possible given the desired number of scales.
\end{enumerate}

After this procedure, we then define $w_{j_c}(\mathbf{P}_n)=\mathbf{P}_n^{t_{j-1}^c}-\mathbf{P}_n^{t_{j}^c}$ for $2\leq j\leq J$ and define $a_J(\mathbf{P}_n)=\mathbf{P}_n^{t_J}$.
Importantly, we emphasize that these scales are learned independently for each channel $c$ so that we may learn different scales $t_j^c$ for each channel. For example, in our experiments on the melanoma data set, Infogain chooses scales $\{1, 2, 3, 4, 5, 6, 7, 9, 13, 32\}$ for channel 8 (protein `CD31') and $\{1, 2, 3, 4, 6, 7, 9, 11, 15, 32\}$ for channel 2 (protein `CD11b'). 
The differing scales in channel are due to the fact that renormalized cumulative sums $\{S_{t,c}\}_{t=2}^{t_{J-1}}$ are different for each channel as illustrated in Figure \ref{fig:melanoma_P_Infogain_cmltv_klds_plot} (for Infogain applied to the melanoma data set).
Therefore, we choose different scales for each channel which allows us to ensure that there are approximately equal amounts of information in each of the wavelet features $w_{j_c}(\mathbf{P}_n)\mathbf{x}_c$. 

We note that, in general, it is not possible to choose integer scales $t_j^c$ so that the relative entropies are exactly evenly spaced. As a practical strategy, one can designate `selector quantiles' and pick  $t_j^c$ to be the first integer where $S_{t_{j}^c,c}$ exceeds a given quantile.
For example, if one chooses selector quantiles of $[0.2, 0.4, 0.6, 0.8]$, for a channel $c$, then $t^c_{3}$ will be diffusion scale where the channel's normalized cumulative information gain, $S_{t_{j}^c,c}$ exceeds $0.2$. We also note the Infogain procedure is parallelizable as the first four steps can be parallelized over the graphs and the next four can be parallelized over channels.

\section{Conclusions}\label{sec: conclusion}

We have introduced a new framework for analyzing and implementing manifold neural networks that we call manifold filter-combine networks. This framework naturally allows us to think about many interesting classes of MNNs such as the manifold analogs of GCNs and several relaxed variations of the manifold scattering transform. Additionally, we have provided methods for implementing such networks when one does not have global knowledge of the manifold, but merely has access to $n$ sample points, that  provably converge to their continuum limit as $n\rightarrow\infty$.

\backmatter

\bmhead{Acknowledgments}

The authors thank Jeff Calder, Anna Little, and Luana Ruiz for helpful discussion that greatly improved the quality of our exposition. DJ, MP and SK were funded in part by NSF-DMS-2327211. MP was funded in part by NSF-OIA-2242769. SK was funded in part by NSF Career Grant-2047856. DN was funded in part by NSF-DMS-2408912. JC was funded in part by NSF-DMS-2136090 and NSF-DGE-2034835.

\section*{Declarations}

\subsection*{Competing interests}
On behalf of all authors, the corresponding author states that there is no conflict of interest.

\subsection*{Code availability}
Code to reproduce our experiments can be found at \url{https://github.com/dj408/mfcn}. 

\subsection*{Data availability}
The synthetic data for the convergence and ellipsoids regression experiments may be reproduced from the available code. Data for the melanoma patient manifold classification experiment is available on Mendeley Data at \url{https://data.mendeley.com/datasets/79y7bht7tf/}.

\bibliography{sn-bibliography}

\begin{appendices}

\section{The Proof of Lemma \ref{lem: bernstein}}\label{sec: proof of bernstein}

\begin{proof}

Define random variables $\{X_i\}_{i=1}^n$ by $X_i \coloneqq f(x_i)g(x_i)$ and note that by definition we have 
\[\langle P_n f, P_n g \rangle_2 = \frac{1}{n} \sum_{i = 1}^n f(x_i)g(x_i) = \frac{1}{n}\sum_{i=1}^n X_i.\]
 Since the $x_i$ are sampled i.i.d.~with density $\rho$, we have 
\[\mathbb{E}[X_i] = \int_\mathcal{M}f(x)g(x)\rho(x)dx=\langle f, g \rangle_{L^2(\mathcal{M})}.\]
Therefore, letting $\sigma^2 \coloneqq \mathbb{E}[X_i^2] - \mathbb{E}[X_i]^2$ and $M \coloneqq \|fg - \mathbb{E}[X_i]\|_{L^\infty(\mathcal{M})}$, we see that by Bernstein's inequality, we have
\begin{align*}
    \mathbb{P}(|\langle P_n f, P_n g \rangle_2 - \langle f, g \rangle_{L^2(\mathcal{M})}| > \eta ) &= \mathbb{P}\left(\left | \frac{1}{n}\sum_{i=1}^n X_i - \frac{1}{n} \sum_{i=1}^n \mathbb{E}[X_i]\right | > \eta \right) \\
    &= \mathbb{P}\left(\left | \sum_{i=1}^n X_i - \sum_{i=1}^n \mathbb{E}[X_i]\right | > n\eta \right)\\
    &\leq 2 \exp \left (- \frac{\frac{1}{2}n\eta^2}{\sigma^2 + \frac{1}{3}M\eta}\right ).
\end{align*}
Setting $\eta = 6 \sqrt{\dfrac{\sigma^2 \log(n)}{n}}$, we see that for $n$ large enough so that $1 + \dfrac{M\eta}{3\sigma^2} < 2$, we have
\begin{align*}
    \mathbb{P}(|\langle P_n f, P_n g \rangle_2 - \langle f, g \rangle_{L^2(\mathcal{M})}| > \eta ) &\leq 2 \exp \left (- \frac{18\sigma^2\log(n)}{\sigma^2 + \frac{1}{3}M\eta}\right ) \\
    &= 2 \exp \left (- \frac{18\log(n)}{1 + \frac{M\eta}{3\sigma^2}}\right ) \\
    &< 2 \exp \left ( -9\log(n)\right ) \\
    &= \frac{2}{n^9}.
\end{align*}
We note that $\sigma^2\leq \mathbb{E}[X_i^2]=\langle f^2,g^2\rangle_{L^2(\mathcal{M})}$.
Therefore, by the Cauchy-Schwarz inequality, with probability at least $1 - \frac{2}{n^9}$, we have
\begin{align*}
    |\langle P_n f, P_n g \rangle_2 - \langle f, g \rangle_{L^2(\mathcal{M})}| &\leq 6  \sqrt{\frac{\sigma^2\log(n)}{n}} \\
    &\leq 6 \sqrt{\frac{\log(n)}{n}} \sqrt{\langle f^2,g^2\rangle_{L^2(\mathcal{M})}} \\
    &\leq 6 \sqrt{\frac{\log(n)}{n}} \|f\|_{L^4(\mathcal{M})} \|g\|_{L^4(\mathcal{M})}.
\end{align*}

\end{proof}

\section{The proof of Theorem \ref{thm: Filter Error short}}\label{sec: proof of theorem filter error}
We will prove Theorem \ref{thm: Filter Error} stated below. Theorem \ref{thm: Filter Error short} will then follow immediately by combining Theorem \ref{thm: Filter Error} with Theorem \ref{thm: recall Calder results} with Lemma  \ref{lem: bernstein}. Importantly, we note that our result is given in terms of the quantites $\alpha_n$, $\beta_n$, and $\gamma_n$ which quantify the errors in the eigenvalues, the eigenvectors, and the Fourier coefficients. Therefore, if future works were to produce improved versions of Theorems \ref{thm: recall Calder results} or Lemma \ref{lem: bernstein}, one would readily obtain an improved version of Theorem \ref{thm: Filter Error short}. 

\begin{theorem}\label{thm: Filter Error}
 Let $w:[0,\infty)\rightarrow \mathbb{R}$, $\|w\|_{L^\infty([0,\infty))}\leq 1$,
let $f\in L^2(\mathcal{M})$ be a continuous function,  and assume that either $f$ or $w(\mathcal{L})$ is $\kappa$-bandlimited. 
Assume that there exist sequences of real numbers $\{\alpha_n\}_{n=1}^\infty$, $\{\beta_n\}_{n=1}^\infty$, $\{\gamma_n\}_{n=1}^\infty$, with  $\lim_{n\rightarrow\infty} \alpha_n=\lim_{n\rightarrow\infty}\beta_n=\lim_{n\rightarrow\infty}\gamma_n=0,$ such that for all $1\leq i \leq \kappa$ and for $n$ sufficiently large, we have 
\begin{equation}\label{eqn: alphabetagamma}
    |\lambda_i-\lambda^{n}_i|\leq \alpha_n,\quad    \|P_n\phi_i-\phi_i^n\|_2\leq\beta_n,\quad
    |\langle  P_nf, P_ng \rangle_2 - \langle f,g\rangle_{L^2(\mathcal{M})}| \leq  \gamma_n^2\|f\|_{L^4(\mathcal{M})}\|g\|_{L^4(\mathcal{M})},
\end{equation}
Then for $n$ large enough such that \eqref{eqn: alphabetagamma} holds and $\alpha_nA_{\text{Lip}}(w),\beta_n,\gamma_n\|\phi_i\|_{L^4(\mathcal{M})}\leq 1$ for all $i\leq \kappa$, we have 
\begin{equation}\label{eqn: filter stability no x}
\|w(\mathbf{L}_n)P_nf-P_nw(\mathcal{L})f\|_2\leq  
6\kappa\left((A_{\text{Lip}}(w)\alpha_n+\beta_n)\|f\|_{L^2(\mathcal{M})}+\gamma_n\|f\|_{L^4(\mathcal{M})}\right).
\end{equation}
Furthermore, under the same assumptions on $n$,  we have 
\begin{equation}\label{eqn: filter stability with x}
\|w(\mathbf{L}_n)\mathbf{x}-P_nw(\mathcal{L})f\|_2\leq \|\mathbf{x}-P_nf\|_2 + 
6\kappa\left((A_{\text{Lip}}(w)\alpha_n+\beta_n)\|f\|_{L^2(\mathcal{M})}+\gamma_n\|f\|_{L^4(\mathcal{M})}\right)
\end{equation}
for all $\mathbf{x}\in\mathbb{R}^n$. 
In particular, if $\mathbf{x}=P_nf$, and $A_{\text{Lip}}(w)$ is finite, then \eqref{eqn: filter stability with x} implies that  
\begin{equation*}
\lim_{n\rightarrow\infty}    \|w(\mathbf{L}_n)\mathbf{x}-P_nw(\mathcal{L})f\|_2=0.
\end{equation*}

\end{theorem}

\begin{proof}[The proof of Theorem \ref{thm: Filter Error}]

We first note that if either $w$ or $f$ is $\kappa$-bandlimited, we have
\begin{align}
&\|w(\mathbf{L}_n)P_nf-P_nw(\mathcal{L})f\|_2\nonumber\\
= &\left\| \sum_{i=1}^\kappa w(\lambda_i^n)\langle P_nf,\phi_i^n\rangle_2\phi_i^n - \sum_{i=1}^\kappa w(\lambda_i)\langle f,\phi_i\rangle_\mathcal{M}P_n\phi_i\right\|_2\nonumber\\
\leq&\left\| \sum_{i=1}^\kappa (w(\lambda_i^n)-w(\lambda_i))\langle P_nf,\phi_i^n\rangle_2\phi_i^n\right\|_2+\left\|\sum_{i=1}^\kappa w(\lambda_i)(\langle P_nf,\phi_i^n\rangle_2\phi_i^n- \langle f,\phi_i\rangle_\mathcal{M}P_n\phi_i)\right\|_2.\label{eqn: different eigenvectorsv2}
\end{align}

To bound the first term from \eqref{eqn: different eigenvectorsv2}, we note that by the triangle inequality, the Cauchy-Schwarz inequality, and the assumption that $n$ is large enough so that $A_{\text{Lip}}(w)\alpha_n\leq 1$, we have 
\begin{align}
    \left\| \sum_{i=1}^\kappa (w(\lambda_i^n) - w(\lambda_i))\langle P_nf,\phi_i^n\rangle_2\phi_i^n\right\|_2
    \leq& \max_{1\leq i \leq \kappa} |w(\lambda_i^n)- w(\lambda_i)| \sum_{i=1}^\kappa \|P_n f\|_2 \|\phi_i^n\|^2_2\nonumber\\
    \leq& A_{\text{Lip}}(w)\alpha_n \sum_{i=1}^\kappa \|P_n f\|_2 \|\phi_i^n\|^2_2\nonumber\\
    \leq& A_{\text{Lip}}(w)\kappa\alpha_n  \|P_n f\|_2\nonumber\\ 
    \leq& \kappa  \left(A_{\text{Lip}}(w)\alpha_n\|f\|_{L^2(\mathcal{M})}+\gamma_n\|f\|_{L^4(\mathcal{M})}\right),\nonumber
\end{align}
where we use the fact that $\|\phi_i^n\|_2^2=1$ and that 
\begin{equation}\label{eqn: pnboundv2}\|P_nf\|_2\leq \left(\|f\|_{L^2(\mathcal{M})}^2 + \gamma_n^2\|f\|_{L^4(\mathcal{M})}^2\right)^{1/2}\leq \|f\|_{L^2(\mathcal{M})}+\gamma_n\|f\|_{L^4(\mathcal{M})}.\end{equation}

Now, turning our attention to the second term from \eqref{eqn: different eigenvectorsv2}, we have 
\begin{align}
&\left\|\sum_{i=1}^\kappa w(\lambda_i)(\langle P_nf,\phi_i^n\rangle_2\phi_i^n- \langle f,\phi_i\rangle_{L^2(\mathcal{M})}P_n\phi_i)\right\|_2\nonumber\\
    \leq& \left\|\sum_{i=1}^\kappa w(\lambda_i)\langle P_nf,\phi_i^n\rangle_2(\phi_i^n-P_n\phi_i)\right\|_2
+\left\|\sum_{i=1}^\kappa w(\lambda_i)(\langle P_nf,\phi_i^n\rangle_2- \langle f,\phi_i\rangle_{L^2(\mathcal{M})}P_n\phi_i\right\|_2\label{eqn: use Hoeffding this timev2}.
\end{align}

By the assumption \eqref{eqn: alphabetagamma}, we have $\|\phi_i^n-P_n\phi_i\|_2\leq \beta_n$.
Therefore, since $w$ is non-amplifying, we see
\begin{align}
\left\|\sum_{i=1}^\kappa w(\lambda_i)\langle P_nf,\phi_i^n\rangle_2(\phi_i^n-P_n\phi_i)\right\|_2
    &\leq \kappa \max_{1\leq i\leq \kappa} |\langle P_nf,\phi_i^n\rangle_2|\|\phi_i^n-P_n\phi_i\|_2\nonumber\\
    &\leq \kappa\beta_n\|P_nf\|_2\nonumber\\
    &\leq \kappa\left (\beta_n\|f\|_{L^2(\mathcal{M})}+\gamma_n\|f\|_{L^4(\mathcal{M})}\right ),\label{eqn: evec diffv2}
\end{align}
where the final inequality follows from \eqref{eqn: pnboundv2} and the assumption that $n$ is large enough so that $\beta_n \leq 1$.
Meanwhile, the second term from \eqref{eqn: use Hoeffding this timev2} can be bounded by 
\begin{align*}
&\left\|\sum_{i=1}^\kappa w(\lambda_i)(\langle P_nf,\phi_i^n\rangle_2- \langle f,\phi_i\rangle_\mathcal{M})P_n\phi_i\right\|_2\\
\leq&\sum_{i=1}^\kappa |w(\lambda_i)| |\langle P_nf,\phi_i^n\rangle_2- \langle f,\phi_i\rangle_\mathcal{M}|\|P_n\phi_i\|_2\\   \leq&\sum_{i=1}^\kappa |\langle P_nf,\phi_i^n\rangle_2- \langle f,\phi_i\rangle_\mathcal{M}|\|P_n\phi_i\|_2\\
\leq&\sum_{i=1}^\kappa |\langle P_nf,\phi_i^n\rangle_2-\langle P_nf,P_n\phi_i\rangle_2|\|P_n\phi_i\|_2+\sum_{i = 1}^\kappa |\langle P_nf,P_n\phi_i\rangle_2- \langle f,\phi_i\rangle_\mathcal{M}|\|P_n\phi_i\|_2.
\end{align*}
 By the Cauchy-Schwarz inequality, \eqref{eqn: alphabetagamma},  \eqref{eqn: pnboundv2}, and the assumption that $n$ is large enough so that $\beta_n\leq 1$, we have
\begin{align*}
    |\langle P_nf,\phi_i^n\rangle_2-\langle P_nf,P_n\phi_i\rangle_2| &\leq \| P_nf\|_2 \|\phi_i^n-P_n\phi_i\|_2 \\
    &\leq \beta_n\left(\|f\|_{L^2(\mathcal{M})}+\gamma_n\|f\|_{L^4(\mathcal{M})}\right) \\
    &\leq\left(\beta_n\|f\|_{L^2(\mathcal{M})}+\gamma_n\|f\|_{L^4(\mathcal{M})}\right).
\end{align*}
And also by \eqref{eqn: alphabetagamma} and \eqref{eqn: pnboundv2} we have 
\begin{align*}
    |\langle P_nf,P_n\phi_i\rangle_2- \langle f,\phi_i\rangle_2| \leq \gamma_n^2\|f\|_{L^4(\mathcal{M})}\|\phi_i\|_{L^4(\mathcal{M})},\quad \text{and} \quad \|P_n\phi_i\|_2\leq 1+\gamma_n\|\phi_i\|_{L^4(\mathcal{M})}.\end{align*}
Since $\kappa$ is fixed and  $\lim_{n\rightarrow\infty}\gamma_n=0,$ for sufficiently large $n$ we have $\gamma_n\|\phi_i\|_{L^4(\mathcal{M})}\leq 1$ for all $i\leq \kappa.$ This implies 
\begin{align*}
    |\langle P_nf,P_n\phi_i\rangle_2- \langle f,\phi_i\rangle_2| \leq \gamma_n\|f\|_{L^4(\mathcal{M})},\quad\text{and}\quad
\|P_n\phi_i\|_2\leq 1+\gamma_n\|\phi_i\|_{L^4(\mathcal{M})}\leq 2.
\end{align*}

Therefore, the second term from \eqref{eqn: use Hoeffding this timev2} can be bounded by 
\begin{align}
    &\left\|\sum_{i=1}^\kappa w(\lambda_i)(\langle P_nf,\phi_i^n\rangle_2 -\langle f,\phi_i\rangle_\mathcal{M})P_n\phi_i\right\|_2 \nonumber\\
    \leq&\sum_{i=1}^\kappa |\langle P_nf,\phi_i^n\rangle_2-\langle P_nf,P_n\phi_i\rangle_2|\|P_n\phi_i\|_2 \nonumber +\sum_{i=1}^\kappa|\langle P_nf,P_n\phi_i\rangle_2- \langle f,\phi_i\rangle_2|\|P_n\phi_i\|_2\nonumber\\
\leq&\sum_{i=1}^\kappa\left(\beta_n\|f\|_{L^2(\mathcal{M})}+\gamma_n\|f\|_{L^4(\mathcal{M})}\right)\|P_n\phi_i\|_2 +\sum_{i=1}^\kappa  \gamma_n\|f\|_{L^4(\mathcal{M})}\|P_n\phi_i\|_2\nonumber\\
\leq&4\kappa\left( \beta_n\|f\|_{L^2(\mathcal{M})}+\gamma_n\|f\|_{L^4(\mathcal{M})}\right).\label{eqn: inner pdt diffv2}
\end{align}

Therefore, combining Equations \eqref{eqn: different eigenvectorsv2} through \eqref{eqn: inner pdt diffv2} yields 
\begin{align*}
&\|w(\mathbf{L}_n)P_nf-P_nw(\mathcal{L})f\|_2\nonumber\\
\leq &\left\| \sum_{i=1}^\kappa (w(\lambda_i^n) - w(\lambda_i))\langle P_nf,\phi_i^n\rangle_2\phi_i^n\right\|_2+\left\|\sum_{i=1}^\kappa w(\lambda_i)(\langle P_nf,\phi_i^n\rangle_2\phi_i^n-\langle f,\phi_i\rangle_\mathcal{M}P_n\phi_i)\right\|_2\\
\leq& \kappa  (A_{\text{Lip}}(w)\alpha_n\|f\|_{L^2(\mathcal{M})}+\gamma_n\|f\|_{L^4(\mathcal{M})})+ 5\kappa \left(\beta_n\|f\|_{L^2(\mathcal{M})}+\gamma_n\|f\|_{L^4(\mathcal{M})}\right)\\
\leq &
6\kappa\left((A_{\text{Lip}(w)}\alpha_n+\beta_n)\|f\|_{L^2(\mathcal{M})}+\gamma_n\|f\|_{L^4(\mathcal{M})}\right)
\end{align*}
thus completing the proof of \eqref{eqn: filter stability no x}.

To prove \eqref{eqn: filter stability with x}, we observe that since $\|w\|_{L^\infty([0,\infty))} \leq 1,$ we have \begin{align}
\|w(\mathbf{L}_n)\mathbf{x}-w(\mathbf{L}_n)P_nf\|_2 &=\|w(\mathbf{L}_n)(\mathbf{x}-P_nf)\|_2 \nonumber\\&=\|\sum_{i=1}^n w(\lambda_i^n) \langle \mathbf{x}-P_nf, \phi_i^n\rangle_2\phi_i^n\|_2\\
&=\left(\sum_{i=1}^n |w(\lambda_i^n)|^2|\langle \mathbf{x}-P_nf, \phi_i^n\rangle_2|^2\right)^{1/2}\nonumber\\
&\leq \left(\sum_{i=1}^n |\langle \mathbf{x}-P_nf, \phi_i^n\rangle_2|^2\right)^{1/2}\nonumber\\
&=\|\mathbf{x}-P_nf\|_2.\label{eqn: easy recursion termv2}
\end{align}
Therefore, by the triangle inequality, we have \begin{align*}
\|w(\mathbf{L}_n)\mathbf{x}-P_nw(\mathcal{L})f\|_2
\leq& 
\|w(\mathbf{L}_n)\mathbf{x}-w(\mathbf{L}_n)P_nf\|_2 +
\|w(\mathbf{L}_n)P_nf-P_nw(\mathcal{L})f\|_2\\
\leq& \|\mathbf{x}-P_nf\|_2 + 
6\kappa\left((A_{\text{Lip}(w)}\alpha_n+\beta_n)\|f\|_{L^2(\mathcal{M})}+\gamma_n\|f\|_{L^4(\mathcal{M})}\right)\end{align*}
as desired.
\end{proof}

\section{The Proof of Theorem \ref{thm: bound given filter bound short}} \label{sec: proof of network theorem}

Below, as alluded to in Section \ref{sec: convergence},  we will prove a more general result, Theorem \ref{thm: bound given filter bound}. Theorem \ref{thm: bound given filter bound short} will then follow as a special case. Specifically, we will consider \emph{generalized MFCNs} which are defined in exactly the same manner as in Section  \ref{sec: MFCN} with the exception that in the Filter step, we allow arbitrary bounded linear operators on $\mathcal{L}^2(\mathcal{M})$, denoted by  $W_{j,k}^{(\ell)}$ rather than restricting attention to spectral convolutional operators $w_{j,k}^{(\ell)}(\mathcal{L})$. In our convergence analysis, we will let, $\mathbf{W}^{(\ell)}_{j,k}$ denote a matrix used as a discrete approximation of $W_{j,k}^{(\ell)}$ in the Filter step of the numerical implementation. After we prove Theorem \ref{thm: bound given filter bound}, Theorem \ref{thm: bound given filter bound short} will then follow immediately by observing that Theorem \ref{thm: Filter Error} allows us to choose 
$$
\epsilon_{\ell,n}=\mathcal{O}\left(\left( \frac{\log(n)}{n}\right)^{\frac{1}{d+4}}\right)(A_{\text{Lip}}(w)+1)\|f\|_{L^2(\mathcal{M})}+\mathcal{O}\left(\left(\frac{\log(n)}{n}\right)^{1/4}\right)\|f\|_{\mathbf{L}^4(\mathcal{M})}
$$
in the case where the operators $W_{j,k}^{(\ell)}$ are spectral filters satisfying the assumptions of Theorem \ref{thm: bound given filter bound short}.

We note that the generality of Theorem \ref{thm: bound given filter bound} is motivated in part by the observation that utilizing spectral filters directly with an explicit eigendecomposition has a significant computational cost (i.e., $\mathcal{O}(\kappa n^2)$) and in practice one would likely choose to approximate spectral filters in more computational efficient ways. In this paper, we do not propose a provably accurate method for such an approximation.  We do, however, state our theorem below in generality based on this observation so that if future work provides rigorous convergence guarantees for a such a method, one will immediately obtain a convergence result for the associated generalized MFCNs. Similarly, convergence results for individual spectral filters  applied to other class of functions would also immediately lead to such convergence results for the associated generalized MFCN that relate the rate of convergence to the weights of the network.

\begin{theorem}\label{thm: bound given filter bound}
Let $f \in \mathcal{C}(\mathcal{M})$, and suppose that for all $\ell$, there exists $\epsilon_{\ell,n}>0$ such that we have \begin{equation*}
\|P_nW_{j,k}^{(\ell)}f_k^{(\ell)}-\mathbf{W}^{(\ell)}_{j,k}\mathbf{x}^{(\ell)}_k\|_2 \leq \|\mathbf{x}^{(\ell)}_k-P_nf_k^{\ell}\|_2+ \epsilon_{\ell,n}
\end{equation*}
for all $1\leq k \leq C_\ell$.
Let $A_1^{(\ell)}=\max_{j,k}(|\sum_{i=1}^{C_{\ell}} |\theta_{i,k}^{(\ell,j)}|),$ $A_2^{(\ell)}=\max_{j,k}(\sum_{i=1}^{J_{\ell}} |\alpha_{j,i}^{(\ell,k)}|)$ and assume that $\sigma$ is non-expansive, i.e. $|\sigma(x)-\sigma(y)|\leq |x-y|$.
Then, 
$$\|\mathbf{x}_k^{\ell}-P_nf_k^{\ell}\|_2\leq \sum_{i=0}^{\ell-1} \prod_{j=i}^{\ell-1} A_{1}^{(j)} A_{2}^{(j)} \epsilon_{i,n}.$$
\end{theorem}

In order to prove Theorem \ref{thm: bound given filter bound}, we need the following lemma which bounds the error in each step. 

\begin{lemma}\label{lem: other step error}
The errors induced by the non-filtering steps of our network may be bounded by 
\begin{align}
   \| \mathbf{y}_{j,k}^{(\ell)}-P_ng_{j,k}^{(\ell)}\|_2
&\leq \max_{1\leq i\leq C_{\ell}}\|\tilde{\mathbf{x}}^{(\ell)}_{j,k}-P_n\tilde{f}^{(\ell)}_{j,k}\|_2\sum_{i=1}^{C_{\ell}} |\theta_{i,k}^{(\ell,j)}|,\label{eqn: theta error}
\\
\| \tilde{\mathbf{y}}_{j,k}^{(\ell)}-P_n\tilde{g}_{j,k}^{(\ell)}\|_2
&\leq \max_{1\leq i\leq J_{\ell}}\|\mathbf{y}^{(\ell)}_{j,k}-P_n g^{(\ell)}_{j,k}\|_2\sum_{i=1}^{J_{\ell}} |\alpha_{j,i}^{(\ell,k)}|.\label{eqn: alpha error}\\
\|\mathbf{z}^{(\ell)}_{j,k}-P_nh^{(\ell)}_{j,k}\|_2&\leq\|\tilde{\mathbf{y}}^{(\ell)}_{j,k}-P_n\tilde{g}^{(\ell)}_{j,k}\|_2.\label{eqn: sigma error}    
\end{align}
\end{lemma}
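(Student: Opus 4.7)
The three inequalities split into two types: \eqref{eqn: theta error} and \eqref{eqn: alpha error} concern linear combinations across feature or filter indices, while \eqref{eqn: sigma error} concerns a pointwise nonlinearity. The plan is to handle the two linear-combination bounds by one argument, and the nonlinearity bound by a short Lipschitz argument.

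For \eqref{eqn: theta error}, I would start from the defining relations of the combine step: $\mathbf{y}_{j,k}^{(\ell)}$ is a linear combination of the $\tilde{\mathbf{x}}_{j,i}^{(\ell)}$ with coefficients $\theta_{i,k}^{(\ell,j)}$, and similarly $g_{j,k}^{(\ell)}$ is the same linear combination of the $\tilde{f}_{j,i}^{(\ell)}$ on the manifold side. The key observation is that $P_n$ is linear, so $P_n g_{j,k}^{(\ell)} = \sum_i \theta_{i,k}^{(\ell,j)} P_n \tilde{f}_{j,i}^{(\ell)}$. Subtracting from $\mathbf{y}_{j,k}^{(\ell)}$ gives $\sum_i \theta_{i,k}^{(\ell,j)} (\tilde{\mathbf{x}}_{j,i}^{(\ell)} - P_n \tilde{f}_{j,i}^{(\ell)})$, and the triangle inequality followed by pulling out the maximum of the norms yields the claimed bound. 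The proof of \eqref{eqn: alpha error} is structurally identical: one replaces $\theta_{i,k}^{(\ell,j)}$ by $\alpha_{j,i}^{(\ell,k)}$, the summation range $C_\ell$ by $J_\ell$, and the inputs $\tilde{\mathbf{x}}_{j,i}^{(\ell)}, \tilde{f}_{j,i}^{(\ell)}$ by $\mathbf{y}_{j,i}^{(\ell)}, g_{j,i}^{(\ell)}$, and the same linearity-plus-triangle-inequality argument goes through unchanged.

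For \eqref{eqn: sigma error}, the plan is to exploit two facts: that the nonlinearity $\sigma$ is applied pointwise, and that it is assumed to be non-expansive. Because $\sigma$ acts coordinatewise, it commutes with the sampling operator: $(P_n \sigma(\tilde{g}_{j,k}^{(\ell)}))(x_i) = \sigma(\tilde{g}_{j,k}^{(\ell)}(x_i)) = \sigma((P_n \tilde{g}_{j,k}^{(\ell)})(x_i))$, so $P_n h_{j,k}^{(\ell)} = \sigma(P_n \tilde{g}_{j,k}^{(\ell)})$ entry by entry. Hence $\mathbf{z}_{j,k}^{(\ell)} - P_n h_{j,k}^{(\ell)} = \sigma(\tilde{\mathbf{y}}_{j,k}^{(\ell)}) - \sigma(P_n \tilde{g}_{j,k}^{(\ell)})$, and the $1$-Lipschitz property of $\sigma$ (applied coordinatewise and squared-summed) gives the Euclidean-norm bound by $\|\tilde{\mathbf{y}}_{j,k}^{(\ell)} - P_n \tilde{g}_{j,k}^{(\ell)}\|_2$.

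The main obstacle, such as it is, is purely bookkeeping: one must verify that the manifold-side quantities $g_{j,k}^{(\ell)}$, $\tilde{g}_{j,k}^{(\ell)}$, and $h_{j,k}^{(\ell)}$ are defined via exactly the same coefficients and nonlinearity as their discrete counterparts, so that the subtraction telescopes cleanly in each case. Once the definitions are aligned, each inequality is a one-line consequence of linearity of $P_n$, the triangle inequality, or the assumed non-expansiveness of $\sigma$; no spectral or probabilistic machinery is required for this lemma.
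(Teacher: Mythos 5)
Your proposal is correct and follows essentially the same argument as the paper: linearity of $P_n$ plus the triangle inequality (and pulling out the max) for \eqref{eqn: theta error} and \eqref{eqn: alpha error}, and the pointwise commutation of $\sigma$ with $P_n$ together with non-expansiveness for \eqref{eqn: sigma error}. No meaningful deviation from the paper's proof.
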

\begin{proof}
To verify \eqref{eqn: theta error}, we observe that 
    \begin{align*}
   \| \mathbf{y}_{j,k}^{(\ell)}-P_ng_{j,k}^{(\ell)}\|_2
&=\left\|\sum_{i=1}^{C_{\ell}}\tilde{\mathbf{x}}^{(\ell)}_{j,k}\theta_{i,k}^{(\ell,j)}-P_n\tilde{f}^{(\ell)}_{j,k}\theta_{i,k}^{(\ell,j)}\right\|_2\\
&\leq \sum_{i=1}^{C_{\ell}}|\theta_{i,k}^{(\ell,j)}|\|\tilde{\mathbf{x}}^{(\ell)}_{j,k}-P_n\tilde{f}^{(\ell)}_{j,k}\|_2\nonumber\\
&\leq  \max_{1\leq i\leq C_{\ell}}\|\tilde{\mathbf{x}}^{(\ell)}_{j,k}-P_n\tilde{f}^{(\ell)}_{j,k}\|_2\sum_{i=1}^{C_{\ell}} |\theta_{i,k}^{(\ell,j)}|.
    \end{align*}
The proof of \eqref{eqn: alpha error} is identical to the proof of \eqref{eqn: theta error}. For \eqref{eqn: sigma error}, we see that 
since $\sigma$ is non-expansive we have    
\begin{align*}
\|\mathbf{z}^{(\ell)}_{j,k}-P_nh^{(\ell)}_{j,k}\|^2_2
&=\sum_{i=1}^n|
(\mathbf{z}^{(\ell)}_{j,k})(i)-(P_nh^{(\ell)}_{j,k})(i)|^2\\
&=\sum_{i=1}^n|
(\mathbf{z}^{(\ell)}_{j,k})(i)-h^{(\ell)}_{j,k}(x_i)|^2\\
&=\sum_{i=1}^n|
\sigma((\tilde{\mathbf{y}}^{(\ell)}_{j,k})(i))-\sigma(\tilde{g}^{(\ell)}_{j,k}(x_i))|^2\\
&\leq\sum_{i=1}^n|
(\tilde{\mathbf{y}}^{(\ell)}_{j,k})(i)-\tilde{g}^{(\ell)}_{j,k}(x_i)|^2\\
&=\|\tilde{\mathbf{y}}^{(\ell)}_{j,k}-P_n\tilde{g}^{(\ell)}_{j,k}\|^2_2.
\end{align*} 
\end{proof}

\begin{proof}[Proof of Theorem \ref{thm: bound given filter bound}]
It follows from the definition of the reshaping operator that $$\max_k\|\mathbf{x}_k^{(\ell+1)}-P_nf_k^{(\ell+1)}\|_2
= \max_{j,k}\|\mathbf{z}^{(\ell)}_{p,r}-P_nh^{(\ell)}_{p,r}\|_2.$$

Therefore, by Lemma \ref{lem: other step error} we have 
\begin{align*}
\max_k\|\mathbf{x}_k^{(\ell+1)}-P_nf_k^{(\ell+1)}\|_2
=& \max_{j,k}\|\mathbf{z}^{(\ell)}_{p,r}-P_nh^{(\ell)}_{p,r}\|_2.\\
\leq& \|P_n\tilde{g}^{(\ell)}_{j,k}-\tilde{\mathbf{y}}^{(\ell)}_{j,k}\|_2.\\
\leq& A^{(\ell)}_2\max_{j,k}\|P_n g^{(\ell)}_{j,k}-\mathbf{y}^{(\ell)}_{j,k}\|_2\\
\leq& A^{(\ell)}_2A^{(\ell)}_1\max_{j,k}\|P_n \tilde{f}^{(\ell)}_{j,k}-\tilde{\mathbf{x}}^{(\ell)}_{j,k}\|_2\\
\leq& A^{(\ell)}_2A^{(\ell)}_1(\max_k\|\mathbf{x}_k^{(\ell)}-P_nf_k^{(\ell)}\|_2
+\epsilon_{\ell,n})
\end{align*}

Since $\|\mathbf{x}_0^{(\ell)}-P_nf^{(0)}_k\|_2=0$ for all $k$,
we may use induction to conclude that 
\begin{equation*}\|\mathbf{x}^{(\ell)}_{k}-P_nf^{(\ell)}_k\|_2\leq \sum_{i=0}^{\ell-1} \prod_{j=i}^{\ell-1} A_{1}^{(j)} A_{2}^{(j)} \epsilon_{i,n}.
\end{equation*}
\end{proof}

\section{Further  experimental results}\label{appendix more experiments}

\subsection{Convergence Results}\label{appendix: convergence}

Here, we conduct convergence experiments similar to Section \ref{sec: convergence_results}, but in case where $G_n$ is constructed as an $\epsilon$-graph (as per Definition \ref{ex: eps}) as opposed to a $k$-NN graph. In this case,  the manifold Laplacian $\mathcal{L}$ is given by $\mathcal{L}f=-\frac{1}{2\rho}\text{div}(\rho^2\nabla f)$, which reduces to  $-\frac{1}{8\pi}\Delta$ when sampling uniformly on the sphere. 
As before, we sample $n$ points uniformly from $\mathbb{S}^2$, choose $\epsilon$ in accordance with Theorem \ref{thm: recall Calder results}, and construct an $\epsilon$-graph $G_n$. We consider the same function $f=Y^0_1+Y^0_2$  and spectral filter $w(\lambda)=e^{-\lambda}$ as in the previous experiment, and we report the discretization error $\|w(\mathbf{L}_n) P_n f - P_n w(\mathcal{L}) f\|_2$ over 10 trials for $n$ ranging from $2^6$ to $2^{14}$ in Figure \ref{fig:eps_spectral_filter_converg_plot}. Additionally, we plot the convergence of the first two distinct nontrivial eigenvalues in Figure \ref{fig:eps_eigenvals_converg_plot}, and see that the numerical eigenvalues converge to the true values of $\frac{\ell(\ell+1)}{8\pi}$ for $\ell = 1,2$).

\begin{figure*}[hbt!]
    \begin{center}
    \includegraphics[width=1.0\textwidth]{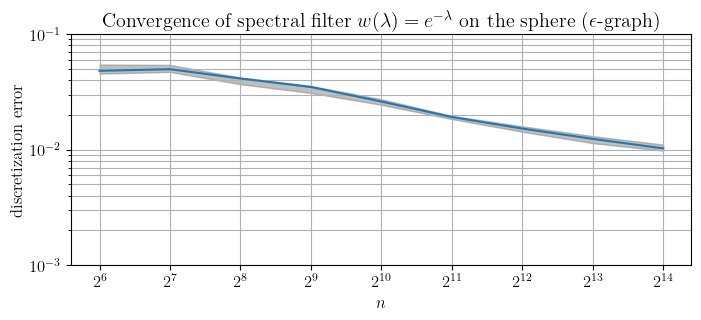}
    \caption{Discretization error for spectral filter $w(\lambda)=e^{-\lambda}$ applied to $f=Y^0_1+Y^0_2$, using an $\epsilon$-graph construction. The median error of 10 runs is shown in blue, against a gray band of the 25th- to 75th-percentile error range.}
    \label{fig:eps_spectral_filter_converg_plot}
    \end{center}
\end{figure*}

\begin{figure*}[hbt!]
    \begin{center}
    \includegraphics[width=1.0\textwidth]{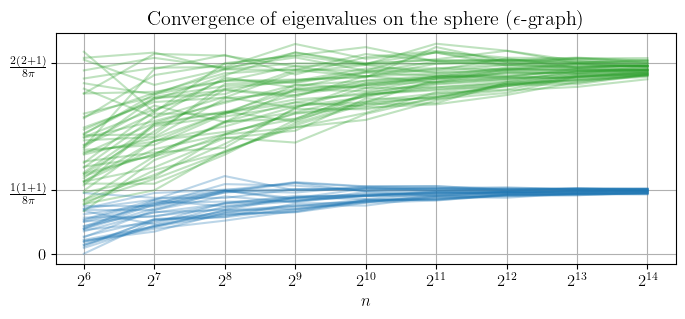}
    \caption{Convergence of first eight, non-zero eigenvalues on the sphere, for an $\epsilon$-graph construction, all 10 runs combined. The blue lines are the first three eigenvalues that converge to the same limit (since the first non-zero eigenvalue of the spherical Laplacian has multiplicity three). Similarly, the green lines are the next five eigenvalues. }
    \label{fig:eps_eigenvals_converg_plot}
    \end{center}
\end{figure*}

\subsection{Regression on Ellipsoids}\label{appendix: ellipsoid}
Here we build upon the regression experiments considered in Section \ref{sec: ellipsoids_results}.
 In order to further understand what type of signals each model is able to learn, we consider the setting where the target function is an individual  eigenfunction, i.e., $f=\phi_i$, $2\leq i \leq 21$. Here we do not apply any noise because our goal is understand the performance of each model to learn within each frequency band. We perform five-fold cross validation for each eigenfunction.
 
 Our results are displayed in Figures \ref{fig:ellipsoids_single_evecs_plot_r2} and \ref{fig:ellipsoids_single_evecs_plot_mse}, where we plot the $R^2$ and MSE scores for the various models as a function of the target eigenfunction. (Results are also displayed in tabular form in Table \ref{tab:ellips_single_all}.) We see that these plots may help us understand which models performed best on the ellipsoid regressions tasks considered in Section \ref{sec: ellipsoids_results} (displayed in Table \ref{table:ellipsoids_combination_signal_results}).  
 Generally, the models which performed best in Section \ref{sec: ellipsoids_results} are able to learn both high- and low-frequency eigenfunctions whereas the other models are unable to learn the higher-frequency eigenfunctions. 
 
 For instance, MFCN-low-pass-spectral-0.5, which was the top performing model in Table \ref{table:ellipsoids_combination_signal_results} achieves strong performance for all $20$ eigenfunctions. The wavelet-based MFCNs, which were the next top methods in Table \ref{table:ellipsoids_combination_signal_results}, are reasonably effective for all of the eigenfunction, although their performance deteriorates a bit after eigenfunction 15. Most of the baseline methods are able to learn the first 11 eigenfunctions, but are unable to learn the higher-frequency eigenfuctions (where the achieve very low $R^2$, if not zero). Among the baselines, the one exception is GraphSAGE, which is able to learn the first 15 eigenfunctions (and achieves near-zero $R^2$ scores after that). Notably, GraphSAGE is also the top performing baseline model in both the noisy and noiseless settings. As in Table \ref{table:ellipsoids_combination_signal_results}, GIN and MFCN-low-pass-spectral-1.0 again fail completely, achieving $R^2$ scores of 0.00 for all of the eigenfunctions.

\begin{figure*}[hbt!]
    \begin{center}
    \includegraphics[width=1.0\textwidth, trim={1.5cm 0 2.0cm 0},clip]{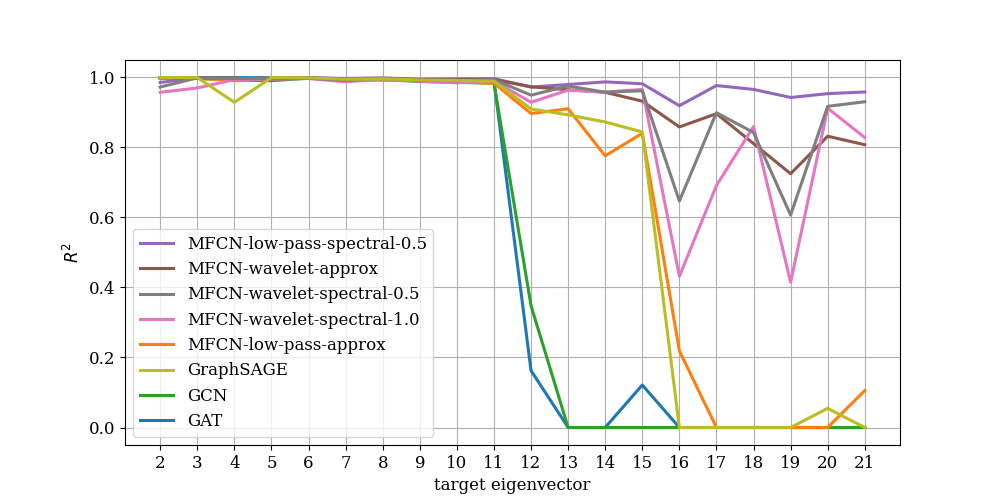}
    \caption{Mean $R^2$ values by target, from five-fold cross-validation for node regression where the targets are the values of a single eigenvector. These target eigenvectors are the first 20 nonconstant eigenvectors of the graph Laplacian of a $k$-NN graph constructed from a uniform sample of 1024 points lying a 2-ellipsoid in 8-d ambient space (i.e. a noiseless sample). Note that MFCN-low-pass-spectral-1.0 and GIN are excluded, as both failed to learn under the experimental regime, consistently returning $R^2$s of zero. Additionally, at eigenfunction 17 and above, the bottom four models also fail to learn, and return $R^2$s near zero.}
    \label{fig:ellipsoids_single_evecs_plot_r2}
    \end{center}
\end{figure*}

\begin{figure*}[hbt!]
    \begin{center}
    \includegraphics[width=1.0\textwidth, trim={1.4cm 0 2.0cm 0},clip]{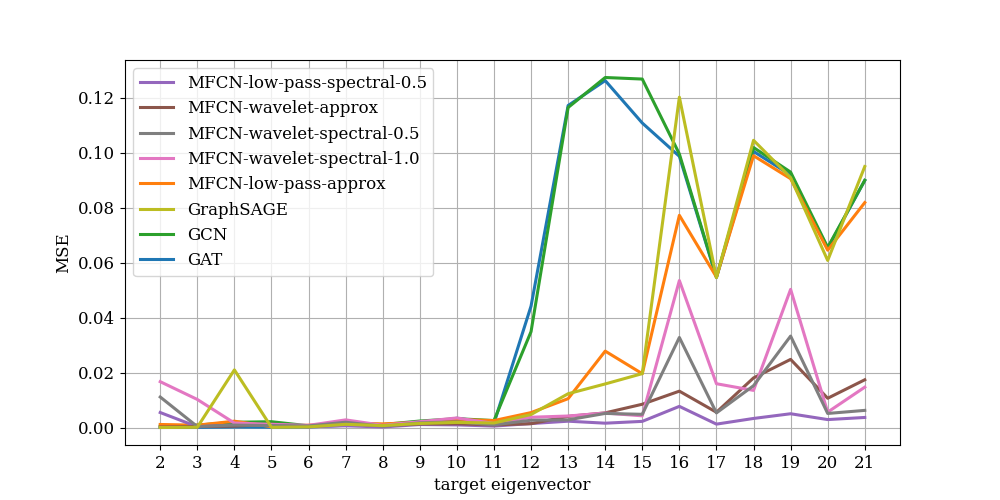}
    \caption{Mean mean squared error (MSE) values by target, from five-fold cross-validation for node regression where the targets are the values of a single eigenvector. These target eigenvectors are the first 20 non-constant eigenvectors of the graph Laplacian of a $k$-NN graph constructed from a uniform sample of 1024 points lying a 2-ellipsoid in 8-d ambient space (i.e. a noiseless sample). Note that MFCN-low-pass-spectral and GIN data are excluded, as both fail to learn under the experimental regime, consistently returning nearly identical MSEs consistent with mean-of-training targets predictions. In addition, at eigenfunction 17 and above, the bottom four models also fail to learn, and return mean-of-training targets predictions.}
    \label{fig:ellipsoids_single_evecs_plot_mse}
    \end{center}
\end{figure*}

\begin{longtable}{lccc}
\caption{Model regression performance metrics on individual eigenfunctions of ellipsoids. Models are sorted within eigenfunction groups by MSE score (by lowest mean MSE, then lowest MSE standard deviation in the case of ties). $R^2$ scores $<0$ (occurring for numerical reasons) have been corrected back to $0$.} \\
\label{tab:ellips_single_all} \\
\hline
\textbf{Model} & \textbf{Eigenfunction} & $\mathbf{R^2}$ & \textbf{MSE} \\\hline
\endfirsthead
\hline
\textbf{Model} & \textbf{Eigenfunction} & $\mathbf{R^2}$ & \textbf{MSE} \\\hline
\endhead
GraphSAGE & 2 & $ 1.0000 \pm 0.0000 $ & $ 0.0000 \pm 0.0000 $ \\
GAT & 2 & $ 1.0000 \pm 0.0000 $ & $ 0.0000 \pm 0.0000 $ \\
MFCN-wavelet-approx & 2 & $ 0.9995 \pm 0.0002 $ & $ 0.0002 \pm 0.0001 $ \\
GCN & 2 & $ 0.9988 \pm 0.0001 $ & $ 0.0005 \pm 0.0001 $ \\
MFCN-low-pass-approx & 2 & $ 0.9972 \pm 0.0008 $ & $ 0.0011 \pm 0.0003 $ \\
MFCN-low-pass-spectral-0.5 & 2 & $ 0.9858 \pm 0.0292 $ & $ 0.0055 \pm 0.0112 $ \\
MFCN-wavelet-spectral-0.5 & 2 & $ 0.9727 \pm 0.0363 $ & $ 0.0111 \pm 0.0148 $ \\
MFCN-wavelet-spectral-1.0 & 2 & $ 0.9577 \pm 0.0373 $ & $ 0.0167 \pm 0.0147 $ \\
GIN & 2 & $ 0.0000 \pm 0.0000 $ & $ 0.3960 \pm 0.0245 $ \\
MFCN-low-pass-spectral-1.0 & 2 & $ 0.0000 \pm 0.0000 $ & $ 0.3963 \pm 0.0248 $ \\
\midrule
GAT & 3 & $ 0.9999 \pm 0.0000 $ & $ 0.0000 \pm 0.0000 $ \\
GraphSAGE & 3 & $ 0.9997 \pm 0.0004 $ & $ 0.0001 \pm 0.0001 $ \\
MFCN-low-pass-spectral-0.5 & 3 & $ 0.9991 \pm 0.0003 $ & $ 0.0003 \pm 0.0001 $ \\
MFCN-wavelet-spectral-0.5 & 3 & $ 0.9989 \pm 0.0005 $ & $ 0.0003 \pm 0.0001 $ \\
MFCN-wavelet-approx & 3 & $ 0.9989 \pm 0.0008 $ & $ 0.0004 \pm 0.0002 $ \\
MFCN-low-pass-approx & 3 & $ 0.9973 \pm 0.0016 $ & $ 0.0009 \pm 0.0004 $ \\
GCN & 3 & $ 0.9973 \pm 0.0016 $ & $ 0.0009 \pm 0.0005 $ \\
MFCN-wavelet-spectral-1.0 & 3 & $ 0.9700 \pm 0.0643 $ & $ 0.0102 \pm 0.0220 $ \\
GIN & 3 & $ 0.0000 \pm 0.0000 $ & $ 0.3238 \pm 0.0207 $ \\
MFCN-low-pass-spectral-1.0 & 3 & $ 0.0000 \pm 0.0000 $ & $ 0.3241 \pm 0.0208 $ \\
\midrule
GAT & 4 & $ 0.9994 \pm 0.0001 $ & $ 0.0002 \pm 0.0000 $ \\
MFCN-wavelet-approx & 4 & $ 0.9981 \pm 0.0024 $ & $ 0.0005 \pm 0.0007 $ \\
MFCN-wavelet-spectral-0.5 & 4 & $ 0.9967 \pm 0.0019 $ & $ 0.0009 \pm 0.0005 $ \\
MFCN-low-pass-spectral-0.5 & 4 & $ 0.9966 \pm 0.0010 $ & $ 0.0009 \pm 0.0002 $ \\
MFCN-wavelet-spectral-1.0 & 4 & $ 0.9939 \pm 0.0026 $ & $ 0.0016 \pm 0.0007 $ \\
GCN & 4 & $ 0.9930 \pm 0.0050 $ & $ 0.0019 \pm 0.0014 $ \\
MFCN-low-pass-approx & 4 & $ 0.9913 \pm 0.0102 $ & $ 0.0023 \pm 0.0027 $ \\
GraphSAGE & 4 & $ 0.9291 \pm 0.1574 $ & $ 0.0210 \pm 0.0466 $ \\
MFCN-low-pass-spectral-1.0 & 4 & $ 0.0000 \pm 0.0000 $ & $ 0.2691 \pm 0.0191 $ \\
GIN & 4 & $ 0.0000 \pm 0.0000 $ & $ 0.2706 \pm 0.0212 $ \\
\midrule
GraphSAGE & 5 & $ 0.9997 \pm 0.0001 $ & $ 0.0001 \pm 0.0000 $ \\
GAT & 5 & $ 0.9996 \pm 0.0001 $ & $ 0.0001 \pm 0.0000 $ \\
MFCN-low-pass-approx & 5 & $ 0.9987 \pm 0.0001 $ & $ 0.0003 \pm 0.0000 $ \\
MFCN-wavelet-spectral-0.5 & 5 & $ 0.9969 \pm 0.0014 $ & $ 0.0007 \pm 0.0003 $ \\
MFCN-low-pass-spectral-0.5 & 5 & $ 0.9967 \pm 0.0006 $ & $ 0.0008 \pm 0.0001 $ \\
MFCN-wavelet-spectral-1.0 & 5 & $ 0.9950 \pm 0.0017 $ & $ 0.0012 \pm 0.0004 $ \\
MFCN-wavelet-approx & 5 & $ 0.9951 \pm 0.0095 $ & $ 0.0012 \pm 0.0024 $ \\
GCN & 5 & $ 0.9909 \pm 0.0042 $ & $ 0.0021 \pm 0.0009 $ \\
MFCN-low-pass-spectral-1.0 & 5 & $ 0.0000 \pm 0.0000 $ & $ 0.2363 \pm 0.0106 $ \\
GIN & 5 & $ 0.0000 \pm 0.0000 $ & $ 0.2367 \pm 0.0100 $ \\
\midrule
GAT & 6 & $ 0.9993 \pm 0.0001 $ & $ 0.0002 \pm 0.0001 $ \\
GraphSAGE & 6 & $ 0.9992 \pm 0.0002 $ & $ 0.0002 \pm 0.0000 $ \\
MFCN-low-pass-spectral-0.5 & 6 & $ 0.9991 \pm 0.0002 $ & $ 0.0003 \pm 0.0001 $ \\
MFCN-wavelet-approx & 6 & $ 0.9989 \pm 0.0002 $ & $ 0.0003 \pm 0.0001 $ \\
MFCN-wavelet-spectral-0.5 & 6 & $ 0.9979 \pm 0.0005 $ & $ 0.0006 \pm 0.0001 $ \\
GCN & 6 & $ 0.9979 \pm 0.0002 $ & $ 0.0006 \pm 0.0001 $ \\
MFCN-wavelet-spectral-1.0 & 6 & $ 0.9972 \pm 0.0006 $ & $ 0.0008 \pm 0.0001 $ \\
MFCN-low-pass-approx & 6 & $ 0.9971 \pm 0.0011 $ & $ 0.0009 \pm 0.0003 $ \\
MFCN-low-pass-spectral-1.0 & 6 & $ 0.0000 \pm 0.0000 $ & $ 0.2953 \pm 0.0285 $ \\
GIN & 6 & $ 0.0000 \pm 0.0000 $ & $ 0.2954 \pm 0.0276 $ \\
\midrule
MFCN-low-pass-spectral-0.5 & 7 & $ 0.9972 \pm 0.0006 $ & $ 0.0006 \pm 0.0002 $ \\
GAT & 7 & $ 0.9962 \pm 0.0007 $ & $ 0.0008 \pm 0.0001 $ \\
GraphSAGE & 7 & $ 0.9948 \pm 0.0013 $ & $ 0.0012 \pm 0.0003 $ \\
MFCN-wavelet-approx & 7 & $ 0.9932 \pm 0.0065 $ & $ 0.0015 \pm 0.0014 $ \\
MFCN-low-pass-approx & 7 & $ 0.9929 \pm 0.0021 $ & $ 0.0016 \pm 0.0004 $ \\
GCN & 7 & $ 0.9926 \pm 0.0035 $ & $ 0.0016 \pm 0.0007 $ \\
MFCN-wavelet-spectral-0.5 & 7 & $ 0.9914 \pm 0.0043 $ & $ 0.0019 \pm 0.0010 $ \\
MFCN-wavelet-spectral-1.0 & 7 & $ 0.9876 \pm 0.0046 $ & $ 0.0028 \pm 0.0011 $ \\
MFCN-low-pass-spectral-1.0 & 7 & $ 0.0000 \pm 0.0000 $ & $ 0.2248 \pm 0.0152 $ \\
GIN & 7 & $ 0.0000 \pm 0.0000 $ & $ 0.2255 \pm 0.0150 $ \\
\midrule
MFCN-low-pass-spectral-0.5 & 8 & $ 0.9989 \pm 0.0002 $ & $ 0.0002 \pm 0.0000 $ \\
MFCN-wavelet-spectral-0.5 & 8 & $ 0.9974 \pm 0.0011 $ & $ 0.0005 \pm 0.0002 $ \\
MFCN-wavelet-approx & 8 & $ 0.9960 \pm 0.0023 $ & $ 0.0007 \pm 0.0004 $ \\
GraphSAGE & 8 & $ 0.9960 \pm 0.0010 $ & $ 0.0007 \pm 0.0002 $ \\
MFCN-wavelet-spectral-1.0 & 8 & $ 0.9954 \pm 0.0022 $ & $ 0.0008 \pm 0.0004 $ \\
GAT & 8 & $ 0.9949 \pm 0.0006 $ & $ 0.0009 \pm 0.0001 $ \\
GCN & 8 & $ 0.9939 \pm 0.0007 $ & $ 0.0011 \pm 0.0001 $ \\
MFCN-low-pass-approx & 8 & $ 0.9925 \pm 0.0032 $ & $ 0.0013 \pm 0.0005 $ \\
MFCN-low-pass-spectral-1.0 & 8 & $ 0.0000 \pm 0.0000 $ & $ 0.1783 \pm 0.0065 $ \\
GIN & 8 & $ 0.0000 \pm 0.0000 $ & $ 0.1784 \pm 0.0067 $ \\
\midrule
MFCN-low-pass-spectral-0.5 & 9 & $ 0.9947 \pm 0.0012 $ & $ 0.0011 \pm 0.0003 $ \\
MFCN-wavelet-approx & 9 & $ 0.9938 \pm 0.0031 $ & $ 0.0013 \pm 0.0006 $ \\
GAT & 9 & $ 0.9936 \pm 0.0024 $ & $ 0.0013 \pm 0.0004 $ \\
GraphSAGE & 9 & $ 0.9929 \pm 0.0009 $ & $ 0.0015 \pm 0.0002 $ \\
MFCN-wavelet-spectral-0.5 & 9 & $ 0.9918 \pm 0.0006 $ & $ 0.0017 \pm 0.0001 $ \\
MFCN-low-pass-approx & 9 & $ 0.9910 \pm 0.0021 $ & $ 0.0019 \pm 0.0004 $ \\
MFCN-wavelet-spectral-1.0 & 9 & $ 0.9896 \pm 0.0027 $ & $ 0.0021 \pm 0.0005 $ \\
GCN & 9 & $ 0.9886 \pm 0.0023 $ & $ 0.0024 \pm 0.0006 $ \\
MFCN-low-pass-spectral-1.0 & 9 & $ 0.0000 \pm 0.0000 $ & $ 0.2072 \pm 0.0131 $ \\
GIN & 9 & $ 0.0000 \pm 0.0000 $ & $ 0.2081 \pm 0.0145 $ \\
\midrule
MFCN-low-pass-spectral-0.5 & 10 & $ 0.9957 \pm 0.0010 $ & $ 0.0010 \pm 0.0002 $ \\
MFCN-wavelet-approx & 10 & $ 0.9940 \pm 0.0047 $ & $ 0.0014 \pm 0.0011 $ \\
GraphSAGE & 10 & $ 0.9918 \pm 0.0028 $ & $ 0.0019 \pm 0.0006 $ \\
MFCN-wavelet-spectral-0.5 & 10 & $ 0.9908 \pm 0.0022 $ & $ 0.0021 \pm 0.0004 $ \\
MFCN-low-pass-approx & 10 & $ 0.9874 \pm 0.0030 $ & $ 0.0029 \pm 0.0008 $ \\
GCN & 10 & $ 0.9859 \pm 0.0076 $ & $ 0.0032 \pm 0.0018 $ \\
GAT & 10 & $ 0.9855 \pm 0.0033 $ & $ 0.0034 \pm 0.0011 $ \\
MFCN-wavelet-spectral-1.0 & 10 & $ 0.9846 \pm 0.0063 $ & $ 0.0035 \pm 0.0015 $ \\
MFCN-low-pass-spectral-1.0 & 10 & $ 0.0000 \pm 0.0000 $ & $ 0.2306 \pm 0.0302 $ \\
GIN & 10 & $ 0.0000 \pm 0.0000 $ & $ 0.2323 \pm 0.0298 $ \\
\midrule
MFCN-low-pass-spectral-0.5 & 11 & $ 0.9966 \pm 0.0005 $ & $ 0.0005 \pm 0.0001 $ \\
MFCN-wavelet-approx & 11 & $ 0.9940 \pm 0.0010 $ & $ 0.0009 \pm 0.0002 $ \\
MFCN-wavelet-spectral-0.5 & 11 & $ 0.9939 \pm 0.0006 $ & $ 0.0009 \pm 0.0001 $ \\
MFCN-wavelet-spectral-1.0 & 11 & $ 0.9903 \pm 0.0042 $ & $ 0.0015 \pm 0.0006 $ \\
GAT & 11 & $ 0.9903 \pm 0.0020 $ & $ 0.0015 \pm 0.0004 $ \\
GraphSAGE & 11 & $ 0.9894 \pm 0.0015 $ & $ 0.0017 \pm 0.0002 $ \\
MFCN-low-pass-approx & 11 & $ 0.9837 \pm 0.0037 $ & $ 0.0025 \pm 0.0005 $ \\
GCN & 11 & $ 0.9838 \pm 0.0026 $ & $ 0.0025 \pm 0.0004 $ \\
MFCN-low-pass-spectral-1.0 & 11 & $ 0.0000 \pm 0.0000 $ & $ 0.1571 \pm 0.0075 $ \\
GIN & 11 & $ 0.0000 \pm 0.0000 $ & $ 0.1577 \pm 0.0082 $ \\
\midrule
MFCN-wavelet-approx & 12 & $ 0.9734 \pm 0.0041 $ & $ 0.0014 \pm 0.0002 $ \\
MFCN-low-pass-spectral-0.5 & 12 & $ 0.9726 \pm 0.0070 $ & $ 0.0014 \pm 0.0004 $ \\
MFCN-wavelet-spectral-0.5 & 12 & $ 0.9495 \pm 0.0068 $ & $ 0.0027 \pm 0.0004 $ \\
MFCN-wavelet-spectral-1.0 & 12 & $ 0.9292 \pm 0.0173 $ & $ 0.0037 \pm 0.0010 $ \\
GraphSAGE & 12 & $ 0.9099 \pm 0.0186 $ & $ 0.0047 \pm 0.0009 $ \\
MFCN-low-pass-approx & 12 & $ 0.8970 \pm 0.0202 $ & $ 0.0054 \pm 0.0011 $ \\
GCN & 12 & $ 0.3481 \pm 0.4999 $ & $ 0.0349 \pm 0.0270 $ \\
GAT & 12 & $ 0.1625 \pm 0.4006 $ & $ 0.0442 \pm 0.0213 $ \\
MFCN-low-pass-spectral-1.0 & 12 & $ 0.0000 \pm 0.0000 $ & $ 0.0529 \pm 0.0020 $ \\
GIN & 12 & $ 0.0000 \pm 0.0000 $ & $ 0.0529 \pm 0.0019 $ \\
\midrule
MFCN-low-pass-spectral-0.5 & 13 & $ 0.9800 \pm 0.0021 $ & $ 0.0023 \pm 0.0002 $ \\
MFCN-wavelet-spectral-0.5 & 13 & $ 0.9756 \pm 0.0065 $ & $ 0.0028 \pm 0.0007 $ \\
MFCN-wavelet-approx & 13 & $ 0.9649 \pm 0.0062 $ & $ 0.0041 \pm 0.0008 $ \\
MFCN-wavelet-spectral-1.0 & 13 & $ 0.9638 \pm 0.0113 $ & $ 0.0042 \pm 0.0013 $ \\
MFCN-low-pass-approx & 13 & $ 0.9107 \pm 0.0336 $ & $ 0.0105 \pm 0.0045 $ \\
GraphSAGE & 13 & $ 0.8933 \pm 0.0172 $ & $ 0.0123 \pm 0.0017 $ \\
MFCN-low-pass-spectral-1.0 & 13 & $ 0.0000 \pm 0.0000 $ & $ 0.1162 \pm 0.0107 $ \\
GCN & 13 & $ 0.0000 \pm 0.0000 $ & $ 0.1165 \pm 0.0110 $ \\
GIN & 13 & $ 0.0000 \pm 0.0000 $ & $ 0.1166 \pm 0.0108 $ \\
GAT & 13 & $ 0.0000 \pm 0.0000 $ & $ 0.1172 \pm 0.0104 $ \\
\midrule
MFCN-low-pass-spectral-0.5 & 14 & $ 0.9875 \pm 0.0025 $ & $ 0.0016 \pm 0.0004 $ \\
MFCN-wavelet-spectral-0.5 & 14 & $ 0.9580 \pm 0.0253 $ & $ 0.0051 \pm 0.0028 $ \\
MFCN-wavelet-spectral-1.0 & 14 & $ 0.9583 \pm 0.0057 $ & $ 0.0053 \pm 0.0010 $ \\
MFCN-wavelet-approx & 14 & $ 0.9577 \pm 0.0052 $ & $ 0.0053 \pm 0.0009 $ \\
GraphSAGE & 14 & $ 0.8728 \pm 0.0246 $ & $ 0.0159 \pm 0.0025 $ \\
MFCN-low-pass-approx & 14 & $ 0.7761 \pm 0.1323 $ & $ 0.0278 \pm 0.0159 $ \\
MFCN-low-pass-spectral-1.0 & 14 & $ 0.0000 \pm 0.0000 $ & $ 0.1262 \pm 0.0114 $ \\
GAT & 14 & $ 0.0000 \pm 0.0000 $ & $ 0.1263 \pm 0.0109 $ \\
GIN & 14 & $ 0.0000 \pm 0.0000 $ & $ 0.1269 \pm 0.0118 $ \\
GCN & 14 & $ 0.0000 \pm 0.0000 $ & $ 0.1274 \pm 0.0099 $ \\
\midrule
MFCN-low-pass-spectral-0.5 & 15 & $ 0.9820 \pm 0.0038 $ & $ 0.0022 \pm 0.0004 $ \\
MFCN-wavelet-spectral-1.0 & 15 & $ 0.9659 \pm 0.0129 $ & $ 0.0043 \pm 0.0015 $ \\
MFCN-wavelet-spectral-0.5 & 15 & $ 0.9620 \pm 0.0151 $ & $ 0.0048 \pm 0.0021 $ \\
MFCN-wavelet-approx & 15 & $ 0.9324 \pm 0.0236 $ & $ 0.0085 \pm 0.0031 $ \\
MFCN-low-pass-approx & 15 & $ 0.8407 \pm 0.0648 $ & $ 0.0196 \pm 0.0069 $ \\
GraphSAGE & 15 & $ 0.8444 \pm 0.0240 $ & $ 0.0196 \pm 0.0037 $ \\
GAT & 15 & $ 0.1213 \pm 0.2930 $ & $ 0.1108 \pm 0.0385 $ \\
MFCN-low-pass-spectral-1.0 & 15 & $ 0.0000 \pm 0.0000 $ & $ 0.1266 \pm 0.0091 $ \\
GIN & 15 & $ 0.0000 \pm 0.0000 $ & $ 0.1267 \pm 0.0087 $ \\
GCN & 15 & $ 0.0000 \pm 0.0000 $ & $ 0.1268 \pm 0.0086 $ \\
\midrule
MFCN-low-pass-spectral-0.5 & 16 & $ 0.9195 \pm 0.0268 $ & $ 0.0077 \pm 0.0022 $ \\
MFCN-wavelet-approx & 16 & $ 0.8587 \pm 0.0899 $ & $ 0.0132 \pm 0.0075 $ \\
MFCN-wavelet-spectral-0.5 & 16 & $ 0.6469 \pm 0.3695 $ & $ 0.0328 \pm 0.0327 $ \\
MFCN-wavelet-spectral-1.0 & 16 & $ 0.4320 \pm 0.3872 $ & $ 0.0535 \pm 0.0330 $ \\
MFCN-low-pass-approx & 16 & $ 0.2193 \pm 0.3308 $ & $ 0.0772 \pm 0.0365 $ \\
GIN & 16 & $ 0.0000 \pm 0.0000 $ & $ 0.0980 \pm 0.0108 $ \\
MFCN-low-pass-spectral-1.0 & 16 & $ 0.0000 \pm 0.0000 $ & $ 0.0983 \pm 0.0104 $ \\
GAT & 16 & $ 0.0000 \pm 0.0000 $ & $ 0.0987 \pm 0.0103 $ \\
GCN & 16 & $ 0.0000 \pm 0.0000 $ & $ 0.0996 \pm 0.0107 $ \\
GraphSAGE & 16 & $ 0.0000 \pm 0.0000 $ & $ 0.1202 \pm 0.0279 $ \\
\midrule
MFCN-low-pass-spectral-0.5 & 17 & $ 0.9769 \pm 0.0070 $ & $ 0.0012 \pm 0.0004 $ \\
MFCN-wavelet-spectral-0.5 & 17 & $ 0.8994 \pm 0.0558 $ & $ 0.0053 \pm 0.0026 $ \\
MFCN-wavelet-approx & 17 & $ 0.8966 \pm 0.0191 $ & $ 0.0056 \pm 0.0009 $ \\
MFCN-wavelet-spectral-1.0 & 17 & $ 0.6913 \pm 0.3751 $ & $ 0.0159 \pm 0.0182 $ \\
GIN & 17 & $ 0.0000 \pm 0.0000 $ & $ 0.0543 \pm 0.0055 $ \\
GraphSAGE & 17 & $ 0.0000 \pm 0.0000 $ & $ 0.0546 \pm 0.0059 $ \\
GAT & 17 & $ 0.0000 \pm 0.0000 $ & $ 0.0546 \pm 0.0060 $ \\
MFCN-low-pass-approx & 17 & $ 0.0000 \pm 0.0000 $ & $ 0.0548 \pm 0.0063 $ \\
GCN & 17 & $ 0.0000 \pm 0.0000 $ & $ 0.0548 \pm 0.0063 $ \\
MFCN-low-pass-spectral-1.0 & 17 & $ 0.0000 \pm 0.0000 $ & $ 0.0549 \pm 0.0065 $ \\
\midrule
MFCN-low-pass-spectral-0.5 & 18 & $ 0.9660 \pm 0.0092 $ & $ 0.0033 \pm 0.0007 $ \\
MFCN-wavelet-spectral-1.0 & 18 & $ 0.8598 \pm 0.0394 $ & $ 0.0135 \pm 0.0029 $ \\
MFCN-wavelet-spectral-0.5 & 18 & $ 0.8428 \pm 0.1018 $ & $ 0.0152 \pm 0.0098 $ \\
MFCN-wavelet-approx & 18 & $ 0.8113 \pm 0.0614 $ & $ 0.0181 \pm 0.0040 $ \\
MFCN-low-pass-spectral-1.0 & 18 & $ 0.0000 \pm 0.0000 $ & $ 0.0989 \pm 0.0142 $ \\
MFCN-low-pass-approx & 18 & $ 0.0000 \pm 0.0000 $ & $ 0.0989 \pm 0.0142 $ \\
GIN & 18 & $ 0.0000 \pm 0.0000 $ & $ 0.0993 \pm 0.0141 $ \\
GAT & 18 & $ 0.0000 \pm 0.0000 $ & $ 0.1005 \pm 0.0146 $ \\
GCN & 18 & $ 0.0000 \pm 0.0000 $ & $ 0.1019 \pm 0.0154 $ \\
GraphSAGE & 18 & $ 0.0000 \pm 0.0000 $ & $ 0.1045 \pm 0.0492 $ \\
\midrule
MFCN-low-pass-spectral-0.5 & 19 & $ 0.9429 \pm 0.0270 $ & $ 0.0050 \pm 0.0019 $ \\
MFCN-wavelet-approx & 19 & $ 0.7251 \pm 0.0563 $ & $ 0.0248 \pm 0.0058 $ \\
MFCN-wavelet-spectral-0.5 & 19 & $ 0.6070 \pm 0.4188 $ & $ 0.0333 \pm 0.0333 $ \\
MFCN-wavelet-spectral-1.0 & 19 & $ 0.4149 \pm 0.4227 $ & $ 0.0503 \pm 0.0340 $ \\
MFCN-low-pass-spectral-1.0 & 19 & $ 0.0000 \pm 0.0000 $ & $ 0.0905 \pm 0.0083 $ \\
MFCN-low-pass-approx & 19 & $ 0.0000 \pm 0.0000 $ & $ 0.0906 \pm 0.0083 $ \\
GIN & 19 & $ 0.0000 \pm 0.0000 $ & $ 0.0906 \pm 0.0083 $ \\
GraphSAGE & 19 & $ 0.0000 \pm 0.0000 $ & $ 0.0910 \pm 0.0076 $ \\
GAT & 19 & $ 0.0000 \pm 0.0000 $ & $ 0.0919 \pm 0.0088 $ \\
GCN & 19 & $ 0.0000 \pm 0.0000 $ & $ 0.0930 \pm 0.0081 $ \\
\midrule
MFCN-low-pass-spectral-0.5 & 20 & $ 0.9538 \pm 0.0242 $ & $ 0.0029 \pm 0.0014 $ \\
MFCN-wavelet-spectral-0.5 & 20 & $ 0.9176 \pm 0.0469 $ & $ 0.0051 \pm 0.0026 $ \\
MFCN-wavelet-spectral-1.0 & 20 & $ 0.9123 \pm 0.0283 $ & $ 0.0056 \pm 0.0017 $ \\
MFCN-wavelet-approx & 20 & $ 0.8322 \pm 0.0273 $ & $ 0.0107 \pm 0.0010 $ \\
GraphSAGE & 20 & $ 0.0547 \pm 0.1320 $ & $ 0.0608 \pm 0.0114 $ \\
GIN & 20 & $ 0.0000 \pm 0.0000 $ & $ 0.0643 \pm 0.0067 $ \\
MFCN-low-pass-approx & 20 & $ 0.0000 \pm 0.0000 $ & $ 0.0645 \pm 0.0068 $ \\
MFCN-low-pass-spectral-1.0 & 20 & $ 0.0000 \pm 0.0000 $ & $ 0.0646 \pm 0.0069 $ \\
GAT & 20 & $ 0.0000 \pm 0.0000 $ & $ 0.0653 \pm 0.0082 $ \\
GCN & 20 & $ 0.0000 \pm 0.0000 $ & $ 0.0656 \pm 0.0089 $ \\
\midrule
MFCN-low-pass-spectral-0.5 & 21 & $ 0.9586 \pm 0.0099 $ & $ 0.0037 \pm 0.0008 $ \\
MFCN-wavelet-spectral-0.5 & 21 & $ 0.9307 \pm 0.0165 $ & $ 0.0062 \pm 0.0019 $ \\
MFCN-wavelet-spectral-1.0 & 21 & $ 0.8283 \pm 0.1164 $ & $ 0.0147 \pm 0.0079 $ \\
MFCN-wavelet-approx & 21 & $ 0.8079 \pm 0.0672 $ & $ 0.0174 \pm 0.0072 $ \\
MFCN-low-pass-approx & 21 & $ 0.1057 \pm 0.2451 $ & $ 0.0819 \pm 0.0275 $ \\
MFCN-low-pass-spectral-1.0 & 21 & $ 0.0000 \pm 0.0000 $ & $ 0.0898 \pm 0.0107 $ \\
GAT & 21 & $ 0.0000 \pm 0.0000 $ & $ 0.0899 \pm 0.0107 $ \\
GIN & 21 & $ 0.0000 \pm 0.0000 $ & $ 0.0900 \pm 0.0108 $ \\
GCN & 21 & $ 0.0000 \pm 0.0000 $ & $ 0.0900 \pm 0.0108 $ \\
GraphSAGE & 21 & $ 0.0000 \pm 0.0000 $ & $ 0.0950 \pm 0.0405 $ \\
\botrule
\end{longtable}

\section{Further experimental details}\label{appendix:experimental_details}

\subsection{General} 
\noindent \textit{MFCN-low-pass models}
\begin{itemize}
    \item Single low-pass filter, $w(\lambda) = e^{-t\lambda}$, applied in the generalized Fourier domain of a $k$-NN graph's Laplacian's first 20 nontrivial eigenvectors (spectral) or $w(\mathbf{P}_n) = \mathbf{P}_n$ (approx). 
    \item Omit the Combine-Filters step.
    \item Two layers/cycles, the first with 32 output channels and the second with 16.
    \item ReLU activation in step (iv)
\end{itemize}

\noindent \textit{Baseline models}
\begin{itemize}
    \item Two layers/cycles, each with 64 output channels, for each baseline module.
    \item Used the GNN module implementations of the python package PyTorch Geometric, named `GCN', `GAT', `GraphSAGE', and `GIN.'
\end{itemize}

\noindent \textit{Training rules}
\begin{itemize}
    \item For each cross-validation fold within each experiment, models are trained up until a minimum `burn-in' number of epochs, or until loss for the hold-out (validation) set fails to improve (by some relative amount) for a fixed number of `patience' epochs. Then, final model weights are those achieved in the epoch of least hold-out set loss, which may fall before the burn-in number of epochs was satisfied. Performance metrics are then calculated using these final models weights on the hold-out set.
    \item The specific burn-in and patience parameters for each experiment are detailed in their respective sections.
\end{itemize}

\noindent \textit{Random seeds}
\begin{itemize}
    \item Applicable random seeds for each experiment can be found in the `args.py' files in the code base, posted on GitHub.
\end{itemize}

\noindent \textit{Hardware}
\begin{itemize}
    \item Models were fit on a compute node that allocated 16 cores of an Intel Xeon Platinum 8562Y 2.8GHz CPU and one Nvidia L40 GPU.
\end{itemize}

\subsection{Ellipsoid node regression experiments}\label{appendix:exp_details_ellipsoids}

\noindent \textit{MFCN-wavelet models}
\begin{itemize}
    \item Dyadic wavelet filter banks, up to $2^J = 32$ ($J = 5$) and including $2^5$ as the low-pass filter; generalized Fourier domain constructed with the first 20 nontrivial eigenvectors for MFCN-spectral.
    \item Two total filter-combine cycles, with eight combinations out in the first cycle and four combinations out in the second cycle, for both the cross-channel (ii) and cross-filter (iii) steps.
    \item ReLU activation in step (iv)
\end{itemize}

\noindent \textit{Final signal channel pooling within nodes}
\begin{itemize}
    \item For each model, after its two convolutional cycles, we used a single final learnable linear layer to transform nodes' hidden layer embeddings into one scalar output prediction for each node, i.e $\widehat{\mathbf{y}}(i) \in \mathbb{R}^{1}$.
\end{itemize}

\noindent \textit{Data}
\begin{itemize}
    \item To numerically assist model learning, all regression target vectors $y$ were mapped onto the interval $[-1, 1]$ using min-max scaling (i.e. $f(y) = (y - \mathrm{min}(y)) / (\mathrm{max}(y) - \mathrm{min}(y))$  followed by the linear transformation $2f(y) - 1$.
\end{itemize}

\noindent \textit{Training regime}
\begin{itemize}
    \item AdamW optimizer, with 0.9 and 0.999 for the first- and second-moment gradient decay rates, and an $\ell^2$-regularization coefficient of $1 \times 10^{-2}$.
    \item Learning rate of 0.01.
    \item No minibatching (each whole ellipsoid graph fit into memory).
    \item Mean squared error loss.
    \item Stopping rule of no (arbitrarily small) improvement in validation loss for 50 epochs, after a burn-in of 100 epochs.
    \item  During training, as is standard practice for node-level tasks, each model's forward pass used all signal channels of all nodes; but during the loss calculation and parameter update steps, the target values of the nodes in the validation set were masked from the optimizer.
\end{itemize}

\subsection{Melanoma patient manifold classification experiments}\label{appendix:exp_details_melanoma}

\noindent \textit{MFCN-wavelet models}
\begin{itemize}
    \item Spectral and approx-dyadic: Dyadic wavelet filter banks, up to $2^J = 32$ (i.e., $J = 5$) and including $2^5$ as the low-pass filter.
    \item Spectral: generalized Fourier domain constructed with the first 20 nontrivial eigenvectors.
    \item Approx-Infogain: `Infogain' wavelet scales for the initial filter bank using a maximum diffusion step $t$ of 32, and information partitioning quantiles of (0.125, 0.25, 0.375, 0.5, 0.625, 0.75, 0.875); then dyadic scales for the second cycle's filter bank (to reduce computational complexity; less necessary in the second cycle with its learned recombinations, as opposed to initial feature extraction directly from the data).
    \item Two total filter-combine cycles, with 16 combinations out in the first cycle and eight combinations out in the second cycle for cross-filter (iii) steps; no cross-channel combination step (ii).
    \item ReLU activation in step (iv).
\end{itemize}

\noindent \textit{Fully-connected network classifier head}
\begin{itemize}
    \item For each graph, each final output channel from MFCN or GNN models was max pooled, and then these channel maximums were concatenated into a vector (or more precisely, a tensor of size `number of graphs' $\times$ `number of output channels' when batch training), which served as the embedded features vector fed into the fully-connected network.
    \item Five linear layers, with sizes (128, 64, 32, 16).
    \item Batch normalization in between each hidden layer; no dropout.
    \item ReLU activations in between hidden layers.
\end{itemize}

\noindent \textit{Data}
\begin{itemize}
    \item The raw protein expression counts for the 29 proteins of interest were $\mathrm{log}_{2}$-scaled and then $\ell^{1}$-normalized in each T-cell sample, such that the sum of all $\mathrm{log}_{2}$ expression values in the sample summed to 1. However, raw protein expression counts of zero were kept at zero.
\end{itemize}

\noindent \textit{Training regime}
\begin{itemize}
    \item AdamW optimizer, with 0.9 and 0.999 for the first- and second-moment gradient decay rates, and an $\ell^2$-regularization coefficient of $1 \times 10^{-5}$.
    \item Learning rate of 0.005.
    \item Minibatch size of 8 graphs, to mitigate overfitting.
    \item Class-balanced binary cross-entropy loss (with logits); loss re-balancing weights were calculated from the training set of each fold to ensure equal contribution to loss from each class (since the melanoma dataset was unbalanced towards the negative `nonresponder' class, approximately 58\% to 42\%).
    \item Stopping rule of no (arbitrarily small) improvement in validation loss for 50 epochs, after a burn-in of 250 epochs.
\end{itemize}

\end{appendices}

\end{document}